\newcommand{\RR}{\mathbb{R}} % RR numbers
\newcommand{\xdim}{d}
\newcommand{\nstates}{n}
\newcommand{\defeq}{\vcentcolon=}
\newcommand{\Actions}{\mathcal{A}}
\newcommand{\States}{\mathcal{S}}
\newcommand{\Pfcn}{\mathrm{Pr}}
\newcommand{\Rfcn}{r}
\newcommand{\Ppi}{\mathbf{P}_\pi}
\newcommand{\Pl}{{\Ppi^{\lambda}}}
\newcommand{\Plq}{{\Ppi^{\lambda,q}}}
\newcommand{\Pgam}{\mathbf{P}_{\pi,\gamma}}
\newcommand{\dpi}{\mathbf{d}_{\pi}}
\newcommand{\rpi}{\rvec_\pi}
\newcommand{\rl}{\rvec_\pi^{\lambda}}
\newcommand{\vpi}{\mathbf{v}_{\pi}}
\newcommand{\Amat}{\mathbf{A}}
\newcommand{\Dmat}{\mathbf{D}}
\newcommand{\Mmat}{\mathbf{M}}
\newcommand{\Pmat}{\mathbf{P}}
\newcommand{\Qmat}{\mathbf{Q}}
\newcommand{\Xmat}{\mathbf{X}}
\newcommand{\Mmatpi}{\mathbf{M}}
\renewcommand{\vec}[1]{\mathbf{\boldsymbol{#1}}}
\newcommand{\zerovec}{\vec{0}}
\newcommand{\onevec}{\vec{1}}
\newcommand{\bvec}{\mathbf{b}}
\newcommand{\dvec}{\mathbf{d}}
\newcommand{\evec}{\mathbf{e}}
\newcommand{\ivec}{\mathbf{i}}
\newcommand{\mvec}{\mathbf{m}}
\newcommand{\qvec}{\mathbf{q}}
\newcommand{\rvec}{\mathbf{r}}
\newcommand{\vvec}{\mathbf{v}}
\newcommand{\wvec}{\mathbf{w}}
\newcommand{\xvec}{\mathbf{x}}
\newcommand{\zvec}{\mathbf{z}}
\newcommand{\vvlambda}{\vvec_{\pi,\lambda}}
\newcommand{\vvpi}{\vvec_\pi}
\newcommand{\maxlam}{r}
\newcommand{\xcirc}[1]{\vcenter{\hbox{$#1\circ$}}}
\newcommand{\ccirc}{\mathbin{\mathchoice
  {\xcirc\scriptstyle}
  {\xcirc\scriptstyle}
  {\xcirc\scriptscriptstyle}
 {\xcirc\scriptscriptstyle}
}}
\newcommand{\hada}{\ccirc}
\newcommand{\fakes}{f}
\newcommand{\fakeset}{\mathcal{F}}
\newcommand{\fakeP}{\bar{\text{P}}\text{r}}
\newcommand{\fakeR}{\bar{\Rfcn}}
\newcommand{\fakeppi}{\bar{\Pmat}_{\pi}}
\newcommand{\fakerpi}{\bar{\rvec}_{\pi}}
\newcommand{\fakegam}{\bar{\gamma}_s}
\newcommand{\fakeS}{\bar{\States}}
\newcommand{\fakedpi}{\bar{\dvec}_{\pi}}
\newcommand{\fakepi}{\bar{\pi}}
\newcommand{\fakevpi}{\bar{\vvec}_{\pi}}
\newcommand{\Dpi}{\mathbf{D}_\pi}
\newcommand{\Tl}{\mathrm{T}^{(\lambda)}}
\newcommand{\Tlq}{T^{(\lambda,q)}}
\newcommand{\Dmu}{\mathbf{D}_{\mu}}
\newcommand{\dmu}{\mathbf{d}_\mu}
\newcommand{\fake}{hypothetical}
 \newcommand{\sbound}[1]{s_{#1}}
 \newcommand{\Proj}{\Pi}
  \newcommand{\Fallv}{ \mathcal{V}}
  \newcommand{\Fsubspace}{ \mathcal{F}_v}
\newcommand{\rlq}{\rvec_q}
\newcommand{\Pgamq}{\mathbf{P}_{\pi,\gamma,q}}
\newcommand{\Pgamlamq}{\mathbf{P}_{\pi,\gamma,\lambda,q}}
\newcommand{\ivecq}{\ivec_q}
\newcommand{\dmuq}{\mathbf{d}_{\mu,q}}
\newcommand{\Xmatq}{\Xmat_q}
\newcommand{\Mmatq}{\Mmat_q}
\newcommand{\Amatq}{\Amat_q}
\newcommand{\bvecq}{\bvec_q}
\newcommand{\piset}{\mathcal{P}}
\newcommand{\eye}{\mathbf{I}}
\newcommand{\inv}{{\raisebox{.2ex}{$\scriptscriptstyle-1$}}}
\newcommand{\diag}{\mathop{\mathrm{diag}}}
\newcommand{\spnorm}[1]{\left\|  #1 \right\|_{sp}}
\newcommand*{\argmin}{\mathop{\mathrm{argmin}}}
\newcommand{\palignbox}{\par \vspace{-0.8cm}}
\newcommand*{\argmax}{\mathop{\mathrm{argmax}}}
\newtheorem{proposition}{Proposition}
\newtheorem{lemma}{Lemma}
\newtheorem{theorem}{Theorem}
\newtheorem{corollary}{Corollary}
\icmltitlerunning{Unifying Task Specification in Reinforcement Learning}
\begin{document}
 
\twocolumn[
\icmltitle{Unifying Task Specification in Reinforcement Learning}

\begin{icmlauthorlist}
\icmlauthor{Martha White}{iu}
\end{icmlauthorlist}

\icmlaffiliation{iu}{Department of Computer Science, Indiana University}
\icmlcorrespondingauthor{Martha White}{martha@indiana.edu}

\icmlkeywords{reinforcement learning,markov decision process,value functions}

\vskip 0.2in
]
\printAffiliationsAndNotice{}  % leave blank if no need to mention equal contribution

\begin{abstract}
Reinforcement learning tasks are typically specified as Markov decision processes. This formalism has been highly successful, though
specifications often couple the dynamics of the environment and the learning
objective. This lack of modularity can complicate generalization of the task specification, as well as obfuscate connections between different task settings,
such as episodic and continuing. 
In this work, we introduce the RL task formalism, that provides a unification through simple constructs including a generalization to transition-based discounting.
Through a series of examples, we demonstrate the generality and utility 
of this formalism. Finally, we extend standard learning constructs, including Bellman operators, and extend some seminal theoretical results, including approximation errors bounds. 
%As part of this, we demonstrate an approach to theoretical characterization for the generalizations in the RL task formalism.
Overall, we provide 
%these results demonstrate proof techniques for the generalizations to  approaches to theoretical 
a well-understood and sound formalism on which to build
theoretical results and simplify algorithm use and development.  
\end{abstract}

\section{Introduction}

Reinforcement learning is a formalism for trial-and-error interaction
between an agent and an unknown environment.
This interaction is typically specified by a Markov decision process (MDP),
which contains a transition model, reward model,
and potentially discount parameters $\gamma$ specifying a discount on
the sum of future values in the return.
%on the definition of the return. 
Domains %in reinforcement learning 
are typically separated into
two cases: episodic problems (finite horizon) and continuing problems (infinite horizon).
In episodic problems, the agent reaches some terminal state,
and is teleported back to a start state.
%with the return (cumulative sum of rewards) cut off by the termination.
%At that point, another episode of interaction beings, until a goal state
%is again reached. 
In continuing problems, the agent interaction is continual, with a discount
to ensure a finite total reward (e.g., constant $\gamma < 1$). 
%Because of these special cases, separate theory has often been developed

This formalism has a long and successful tradition, but is limited in the problems that can be specified.
%Though not widely known, 
Progressively there have been additions 
to specify a broader range of objectives, including options \citep{sutton1999between}, state-based discounting \citep{sutton1995td,sutton2011horde} and interest functions \citep{maei2010gq,sutton2016anemphatic}. 
These generalizations have particularly been driven by off-policy learning
and the introduction of general value functions for Horde \citep{sutton2011horde,white2015thesis},
where predictive knowledge can be encoded as more complex prediction
and control tasks. 
%However, there are also the beginnings of these generalizations in concepts
%such as options \citep{sutton1999between}.
Generalizations to problem specifications provide exciting learning opportunities, but can
also reduce clarity 
%in task specification and create unclear relationship between the generalizations that 
and complicate algorithm development and theory. 
%As these generalizations are introduced, the connections between them
%In recent years, several unifications and generalizations of these settings have been introduced.
%%with unclear connections.
%The original strategy for unifying the two settings was to include
%an absorbing state: once the agent transitions into this absorbing state,
%the agent remains there for infinitely many steps
%receiving zero reward. This definition 
%make the returns to be equivalent
%for an episode, but
For example, options and general value functions have significant overlap, but because of
different terminology and formalization, the connections are not transparent.
Another example is the classic divide between episodic and continuing problems,
which typically require different convergence proofs \citep{bertsekas1996neuro,tsitsiklis1997ananalysis,sutton2009fast}
and different algorithm specifications.

In this work, we propose a formalism for reinforcement learning task specification that unifies many of these generalizations. 
The focus of the formalism is to separate the specification of the dynamics
of the environment and the specification of the objective within that environment.
Though natural, this represents a significant change in the way tasks are currently
specified in reinforcement learning and has important ramifications for
simplifying implementation, algorithm development and theory.
The paper consists of two main contributions.
First, we demonstrate the utility of this formalism by showing unification
of previous tasks specified in reinforcement learning, including options, general value functions and episodic and continuing,
and further providing case studies of utility.
We demonstrate how to specify episodic and continuing tasks with only modifications
to the discount function, 
without the addition of states and modifications to the underlying Markov decision process. This
enables a unification that 
significantly simplifies implementation and easily generalizes theory to cover both settings.
%which in the past were always treated as two different cases.
Second, 
%we illustrate how several seminal results for policy evaluation and policy improvement
%extend to this generalized RL task specification.
we prove novel contraction bounds on the Bellman operator
for these generalized RL tasks, and show that previous bounds for
both episodic and continuing tasks are subsumed by this more general result.  
Overall, our goal is to provide an RL task formalism that requires minimal
modifications to previous task specification, with significant gains in simplicity and unification across common settings.  

\section{Generalized problem formulation}

We assume the agent interacts with an environment
formalized by a Markov decision process (MDP): $(\States, \Actions, \Pfcn)$ % \Rfcn,\gamma)$
where 
$\States$ is the set of states, $\nstates = | \States |$;
$\Actions$ is the set of actions; and
$\Pfcn: \States \times \Actions \times \States \rightarrow [0,1]$ is the transition probability function
where
$\Pfcn(s,a,s')$ is the probability of transitioning from state $s$ into state $s'$ when taking action $a$.
A \textbf{reinforcement learning task} (RL task) is specified on top of these transition dynamics, as the tuple
$(\piset, \Rfcn, \gamma, \ivec)$ where
\begin{enumerate}
\item $\piset$ is a set of policies $\pi: \States \times \Actions \rightarrow [0,1]$;
\item the reward function $\Rfcn: \States \times \Actions \times \States \rightarrow \RR$ specifies reward received
from $(s,a,s')$;
\item $\gamma: \States \times \Actions \times \States \rightarrow [0,1]$ is a \textbf{transition-based
discount function}\footnote{We describe a further probabilistic generalization
in Appendix \ref{app_probabilistic}; much of the treatment
remains the same, but the notation becomes cumbersome and the utility obfuscated.};
\item $\ivec : \States \rightarrow [0, \infty)$ is an interest function that specifies the user defined interest in a state.
\end{enumerate}
Each task could have different reward functions within the same environment. 
For example, in a navigation task within an office, one agent could have the goal to navigate to
the kitchen and the other the conference room.
For a reinforcement learning task, whether prediction or control, 
a set or class of policies is typically considered.
For prediction (policy evaluation), we often select one policy and evaluate its long-term discounted reward.
For control, where a policy is learned, the set of policies may consist of all policies parameterized
by weights that specify the action-value from states, with the goal to find the weights
that yield the optimal policy.
For either prediction or control in an RL task, we often evaluate the return of a policy: the cumulative discounted
reward obtained from following that policy
\begin{align*}
G_t = \sum_{i=0}^\infty \left(\prod_{j=0}^{i-1} \gamma(s_{t+j}, a_{t+j}, s_{t+1+j}) \right) R_{t+1+i}
\end{align*}
where $\prod_{j=0}^{-1} \gamma(s_{t+j}, a_{t+j}, s_{t+1+j}) \defeq 1$.
Note that this subsumes the setting with a constant discount $\gamma_c \in [0,1)$, 
by using $\gamma(s,a,s') = \gamma_c$ for every $s, a, s'$
and so giving $\prod_{j=0}^{i-1} \gamma(s_{t+j}, a_{t+j}, s_{t+1+j}) = \gamma_c^{i}$ for $i > 0$
and $\gamma_c^0 = 1$ for $i = 0$.
As another example, the end of the episode, $\gamma(s,a,s') = 0$, making the product
of these discounts zero and so terminating the recursion.
We further explain how transition-based discount enables specification of episodic tasks
and discuss the utility of the generalization to transition-based discounting throughout this paper.
%and for state-based discount $\gamma_s:\States \rightarrow [0,1]$,
%by setting $\gamma(s,a, s') = \gamma_s(s')$ for every $s, a, s'$.
Finally, the interest function $\ivec$ specifies the degree of importance of each state for the task.
For example, if an agent is only interested in learning an optimal policy for a subset of the environment,
the interest function could be set to one for those states and to zero otherwise. 

We first explain the specification and use of such tasks,
and then define 
%essential learning functions, including 
a generalized Bellman operator and resulting algorithmic extensions and approximation bounds.

\subsection{Unifying episodic and continuing specification}

\newcommand{\raction}{\textit{right}}
\newcommand{\laction}{\textit{left}}
\newcommand{\sone}{s_1}
\newcommand{\stwo}{s_2}
\newcommand{\sthree}{s_3}
\newcommand{\sfour}{s_4}

The RL task specification enables episodic and continuing problems to
be easily encoded with only modification to the transition-based discount.
Previous approaches, including the absorbing state formulation \citep{sutton1998reinforcement}
and state-based discounting \citep{sutton1995td,maei2010gq,sutton2011horde}\citep[Section 2.1.1]{hado2011thesis}, require special cases or modifications to the set of states and underlying MDP,
coupling task specification and the dynamics of the environment. 

We demonstrate how transition-based discounting seamlessly enables episodic
or continuing tasks to be specified in an MDP via a simple chain world.
%We demonstrate how to specify episodic problems using
%a transition-based discount function, in a simple chain world.
Consider the chain world with three states $\sone$, $\stwo$ and $\sthree$ in Figure \ref{fig:unification}.
The start state is $\sone$ and the two actions are \raction\ and \laction.
The reward is -1 per step, with termination occurring when taking
action \raction\ from state $\sthree$, which causes a transition back to state $\sone$.
The discount is 1 for each step, unless specified otherwise. 
The interest is set to 1 in all states, which is the typical case, meaning 
performance from each state is equally important.

Figure \ref{fig_absorbing} depicts the classical approach to specifying episodic problems
using an absorbing state, drawn as a square. The agent reaches the goal---transitioning
right from state $\sthree$---then forever stays in the absorbing state, 
receiving a reward of zero.
This encapsulates the definition of the return, but does not
allow the agent to start another episode. In practice, when this absorbing
state is reached, the agent is ``teleported" to a start state to start another episode.
This episodic interaction can instead be represented the same way as a continuing
problem, by specifying a transition-based discount $\gamma(\sthree,\raction, \sone) = 0$.
This defines the same return, but now the agent simply transitions normally
to a start state, and no \fake\ states are added.
 
To further understand the equivalence,
consider the updates made by TD (see equation \eqref{eq_td}). 
Assume linear function approximation with feature function $\xvec: \States \rightarrow \RR^\xdim$,
with weights $\wvec \in \RR^\xdim$. 
When the agent takes action \raction\
from $\sthree$, the agent transitions from $\sthree$ to $\sone$ with probability one
and so $\gamma_{t+1} = \gamma(\sthree, \raction,\sone) = 0$.
This correctly gives
\begin{align*}
\delta_t &= r_{t+1} + \gamma_{t+1} \xvec(\sone)^\top \wvec - \xvec(\sthree)^\top \wvec 
= r_{t+1} - \xvec(\sthree)^\top \wvec
\end{align*}
and correctly clears the eligibility trace for the next step
\begin{align*}
\evec_{t+1} &= \lambda_{t+1} \gamma_{t+1} \evec_t + \xvec(\sone)
= \xvec(\sone)
.
\end{align*}
The stationary distribution is also clearly equal
to the original episodic task, since the absorbing state
is not used in the computation of the stationary distribution.

Another strategy is to still introduce \fake\ states, but use
state-based $\gamma$, as discussed in Figure \ref{fig_unification_state}. Unlike
absorbing states, the agent does not stay indefinitely in
the \fake\ state.
When the agent goes \raction\ from $\sthree$, it transitions to \fake\ state $\sfour$,
and then transition deterministically to the start state $\sone$, with $\gamma_s(\sfour) = 0$. 
As before, we get the correct update, because $\gamma_{t+1} = \gamma_s(\sfour) = 0$.
Because the stationary distribution has some non-zero probability in
the \fake\ state $\sfour$, 
we must set $\xvec(\sfour) = \xvec(\sone)$ (or $\xvec(\sfour) = \zerovec$).
Otherwise,
the value of the \fake\ state will be learned, wasting
function approximation resources and potentially
modifying the approximation quality of the value in other states.
We could have tried
state-based discounting without adding an additional state $\sfour$.
However, this leads to incorrect value estimates, as depicted
in Figure \ref{fig:unificationwrong}; the relationship between transition-based
and state-based is further discussed in Appendix \ref{sec_equivalence}.
Overall, to keep the specification of the RL task and the MDP separate,
transition-based discounting is necessary to enable the unified
specification of episodic and continuing tasks. 

\newcommand{\figwidth}{.495\textwidth}
\newcommand{\tikzsize}{190pt}
\newcommand{\tikzscale}{0.9}
\newcommand{\labelpaa}{}
\newcommand{\labelpab}{}
\newcommand{\labelpbc}{}
\newcommand{\labelpba}{}
\newcommand{\labelpcd}{}
\newcommand{\labelpcb}{}

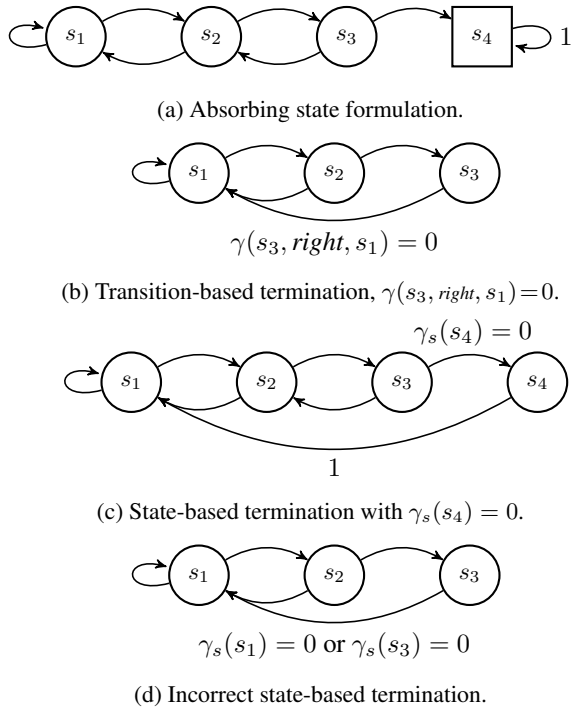
\begin{figure}[h!]
\vspace{-0.1cm}
\centering
\begin{subfigure}{\figwidth}
%\resizebox{\tikzsize}{!}{
		{\begin{tikzpicture}[->, >=stealth', auto, semithick, node distance=2cm]
\tikzstyle{every state}=[fill=white,draw=black,thick,text=black,scale=\tikzscale]
\node[state]    (A)                     {$\sone$};
\node[state]    (B)[right of=A]   {$\stwo$};
\node[state]    (C)[right of=B]   {$\sthree$};
\node[state,rectangle]    (D)[right of=C]   {$\sfour$};
\path
(A) edge[loop left]     node{}         (A)
    edge[bend left]     node{}    (B)
(B) edge[bend left = 30]       node{}          (C)
      edge[bend right = -30]     node{}      (A)
(C) edge[bend left = 30]       node{}          (D)
      edge[bend right = -30]     node{}      (B)
(D) edge[loop right]    node{$1$}     (D);
%(A) edge[loop left]     node{$p_{aa}$}         (A)
%    edge[bend left]     node{$p_{ab}$}     (B)
%(B) edge[bend left = 30]       node{$p_{bc}$}           (C)
%      edge[bend right = -30]     node{$p_{ba}$}      (A)
%(C) edge[bend left = 30]       node{$p_{cd}$}           (D)
%      edge[bend right = -30]     node{$p_{cb}$}      (B)
%(D) edge[loop right]    node{$1$}     (D);
\end{tikzpicture}}
%		}
		\caption{Absorbing state formulation.
		}\label{fig_absorbing}
\end{subfigure}
\begin{subfigure}[r]{\figwidth}
%\resizebox{\tikzsize}{!}{
		\centerline{\begin{tikzpicture}[->, >=stealth', auto, semithick, node distance=2cm]
\tikzstyle{every state}=[fill=white,draw=black,thick,text=black,scale=\tikzscale]
\node[state]    (A)                     {$\sone$};
\node[state]    (B)[right of=A]   {$\stwo$};
\node[state]    (C)[right of=B]   {$\sthree$};
\path
(A) edge[loop left]     node{}         (A)
    edge[bend left]     node{}     (B)
(B) edge[bend left = 30]       node{}           (C)
      edge[bend right = -30]     node{}      (A)
(C) edge[bend right = -30]       node{$\gamma(\sthree,\raction,\sone) = 0$}           (A);
\end{tikzpicture}}
%		}
		\caption{Transition-based termination, $\scriptsize\gamma(\sthree,\raction,\sone) \!=\! 0$.}
\end{subfigure}\\
\begin{subfigure}{\figwidth}
%\resizebox{\tikzsize}{!}{
		\centerline{\begin{tikzpicture}[->, >=stealth', auto, semithick, node distance=2cm]
\tikzstyle{every state}=[fill=white,draw=black,thick,text=black,scale=\tikzscale]
\node[state]    (A)                     {$\sone$};
\node[state]    (B)[right of=A]   {$\stwo$};
\node[state]    (C)[right of=B]   {$\sthree$};
\node[state]    (D)[right of=C]   {$\sfour$};
\path
(A) edge[loop left]     node{\labelpaa}         (A)
    edge[bend left]     node{\labelpab}     (B)
(B) edge[bend left = 30]       node{\labelpbc}           (C)
      edge[bend right = -30]     node{\labelpba}      (A)
(C) edge[bend left = 30]       node{\labelpcd \ $\gamma_s(\sfour) = 0$}           (D)
      edge[bend right = -30]     node{\labelpcb}      (B)
(D) edge[bend right = -30]     node{$1$}      (A);
\end{tikzpicture}}
%		}
		\caption{State-based termination with $\gamma_s(\sfour) = 0$.}\label{fig_unification_state}		
\end{subfigure}
\begin{subfigure}{\figwidth}
%\resizebox{\tikzsize}{!}{
		\centerline{\begin{tikzpicture}[->, >=stealth', auto, semithick, node distance=2cm]
\tikzstyle{every state}=[fill=white,draw=black,thick,text=black,scale=\tikzscale]
\node[state]    (A)                     {$\sone$};
\node[state]    (B)[right of=A]   {$\stwo$};
\node[state]    (C)[right of=B]   {$\sthree$};
\path
(A) edge[loop left]     node{}         (A)
    edge[bend left]     node{}     (B)
(B) edge[bend left = 30]       node{}           (C)
      edge[bend right = -30]     node{}      (A)
(C) edge[bend right = -30]       node{$\gamma_s(\sone) = 0$ or $\gamma_s(\sthree) = 0$}           (A);
%(A) edge[loop left]     node{$p_{aa}$}         (A)
%    edge[bend left]     node{$p_{ab}$}     (B)
%(B) edge[bend left = 30]       node{$p_{bc}$}           (C)
%      edge[bend right = -30]     node{$p_{ba}$}      (A)
%(C) edge[bend right = -50]       node{$p_{cd}, \gamma_s(A) = 0$ or $\gamma_s(C) = 0$}           (A);
\end{tikzpicture}
%
%\begin{tikzpicture}[->, >=stealth', auto, semithick, node distance=3cm]
%\tikzstyle{every state}=[fill=white,draw=black,thick,text=black,scale=1]
%\node[state]    (A)                     {$A$};
%\node[state]    (B)[right of=A]   {$B$};
%\node[state]    (C)[right of=B]   {$C$};
%\node[state]    (D)[right of=C]   {$D$};
%\path
%(A) edge[loop left]     node{$p^2$}         (A)
%    edge[bend left]     node{$(1-p)^2$}     (B)
%    edge[bend left,below]      node{$p(1-p)$}      (D)
%    edge[bend right]    node{$p(1-p)$}      (C)
%(B) edge                node{$1$}           (D)
%(C) edge                node{$1$}           (D)
%(D) edge[loop right]    node{$(1-q)^2$}     (D)
%    edge[bend right,right]    node{$q(1-q)$}      (B)
%    edge[bend left]     node{$q(1-q)$}      (C)
%    edge[bend left,above]     node{$q^2$}         (A);
%\node[above=0.5cm] (A){Patch G};
%\draw[red] ($(D)+(-1.5,0)$) ellipse (2cm and 3.5cm)node[yshift=3cm]{Patch H};
%\end{tikzpicture}}
%		}
		\caption{Incorrect state-based termination.}\label{fig:unificationwrong}
\end{subfigure}
\vspace{-0.3cm}
	\caption{ Three different ways to represent episodic problems as continuing problems.
	For (c), the state-based discount cannot represent the episodic chain problem without adding states.
To see why, consider the two cases for representing termination:
$\gamma_s(\sone) = 0$ or $\gamma_s(\sthree) = 0$.
For simplicity, assume that $\pi(s,\raction) = 0.75$ for all states $s \in \{\sone,\stwo,\sthree\}$
and transitions are deterministic.
% though this would occur for many policies and stochastic transitions. 
If $\gamma_s(\sthree) = 0$, then the value for taking action \raction\ from $\stwo$ is
$r(\stwo,\raction,\sthree) + \gamma_s(\sthree) \vvpi(\sthree) = -1$
and the value for taking action \raction\ from $\sthree$ is 
$r(\sthree,\raction,\sone) + \gamma_s(\sone) \vvpi(\sone) \neq -1$,
which are both incorrect.
If $\gamma_s(\sone) = 0$, then the value of 
taking action \raction\ from $\sthree$ is $-1 + \gamma_s(\sone) \vvpi(\sone) = -1$,
which is correct. However,
the value of taking action \laction\ from $\stwo$
is $-1 + \gamma_s(\sone) \vvpi(\sone) = -1$,
which is incorrect.	}
	\label{fig:unification}
\end{figure}

\subsection{Options as RL tasks}\label{sec_options}

%In this section, we discuss how options
%can be more simply thought of as RL (sub)tasks.
%%General value functions (GVFs) enable similar
%%temporal abstraction as options, but with slightly different 
%%semantics. Interestingly, with the introduction of interest for ETD,
%%the two formalisms can be better unified. 
%%GVFs have associated policies and discounts, which can
%%indicate termination as in options, though they can also ask more general
%%questions. Further, the interest term for ETD can specify
%%the initiation set, by setting the interest to 1 for the initiation set
%%and zero otherwise. In this way, it focuses learning of values
%%from the states of interest. Finally, an additional benefit of using
%%the GVF framework for options is that they also include an associated
%%reward function, which can specify the desired behavior of an option (as a macro-action),
%%which may actually be different from the reward function of the underlying agent. 
%%
The options framework \citep{sutton1999between} generically covers a wide range of settings,
with discussion about macro-actions, option models, interrupting options and intra-option value learning.
These concepts at the time merited their own language, but with recent generalizations
can be more conveniently cast as RL subtasks.
%
%Options were introduced with a state-based termination condition,
%but could similarly be generalized to transition-based. For the same reasons as above,
%it is clear that representing termination due to a change from a transition
%could be useful, as well as terminating due to an event as in state-based.
%%With this generalization, 
%We introduce transition-based options,
%to both generalize the previous options formalism
%and unify them under the more general GVFs formalism with transition-based discounting. 
%%can represent any option models
%%using GVFs.

\begin{proposition}
An option, defined as the tuple \citep[Section 2]{sutton1999between} $(\pi, \beta, \mathcal{I})$ 
with policy $\pi: \States \times \Actions \rightarrow [0,1]$,
termination function $\beta: \States \rightarrow [0,1]$
and an initiation set $\mathcal{I} \subset \States$ from which the option can be run,  
can be equivalently cast as an RL task.
\end{proposition}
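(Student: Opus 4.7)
The plan is to exhibit an explicit mapping from the option components to the RL task components and then verify that the induced return (and hence value function and option model) coincide in expectation. Given an option $(\pi, \beta, \mathcal{I})$ together with the MDP-level scalar discount $\gamma_c \in [0,1]$ and the MDP reward $\Rfcn$, define the RL task by $\piset = \{\pi\}$, reward $\Rfcn$, transition-based discount $\gamma(s,a,s') = \gamma_c(1 - \beta(s'))$, and interest $\ivec(s) = \mathbf{1}\{s \in \mathcal{I}\}$. The key idea is that the option's probability of termination on arrival in $s'$ is absorbed directly into the discount: each surviving step contributes a factor $1-\beta(s')$, and the product collapses to zero upon termination.

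Next I would verify that the RL task return from $s$ matches the option's reward model. Substituting $\gamma(s,a,s') = \gamma_c(1 - \beta(s'))$ into the definition of $G_t$ gives, for $i \geq 1$, the coefficient $\gamma_c^i \prod_{j=0}^{i-1}(1 - \beta(s_{t+1+j}))$ in front of $R_{t+1+i}$. Because the option's random termination time $K$ is generated by an independent $\beta(s)$-coin flip on arrival in each state, $\prod_{j=0}^{i-1}(1 - \beta(s_{t+1+j}))$ equals the conditional probability $\Pr(K > i \mid s_{t+1}, \ldots, s_{t+i})$. Hence by the tower property, $\EE[G_t \mid s_t = s] = \EE[\sum_{i \geq 0} \gamma_c^i \mathbf{1}\{K > i\} R_{t+1+i} \mid s_t = s]$, which is exactly the standard option reward model; the identical argument with an indicator of a next-option-state in place of $R_{t+1+i}$ recovers the option transition model.

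Finally, the interest function directly encodes the initiation set: choosing $\ivec(s) = \mathbf{1}\{s \in \mathcal{I}\}$ restricts the task to states from which the option is allowed to be launched, while permitting more general nonnegative weights yields a strict generalization that allows a soft preference over launch states. The main obstacle is conceptual rather than computational: options have a stochastic termination event whereas the RL task has a per-transition deterministic discount. The resolution I would state explicitly is that by linearity of expectation a deterministic discount of $1 - \beta(s')$ produces the same expected contribution to the return as Bernoulli termination with probability $\beta(s')$, because the product of survival probabilities along a trajectory equals the conditional probability that no termination has yet occurred. Once this identification is made the equivalence reduces to a direct computation.
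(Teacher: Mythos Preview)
Your construction is exactly the paper's: set $\gamma(s,a,s')$ from the termination probability, set $\ivec$ to the indicator of $\mathcal{I}$, and keep the MDP reward. The paper declares the result ``mainly definitional'' and stops after stating this mapping (with $\gamma(s,a,s') = 1-\beta(s')$, i.e.\ without your extra $\gamma_c$ factor), whereas you additionally carry out the verification that the RL-task return reproduces the option reward/transition model in expectation via the survival-probability identity $\prod_j (1-\beta(s_{t+1+j})) = \Pr(K>i\mid\text{trajectory})$. That extra step is sound and makes explicit the point the paper leaves implicit---that a deterministic multiplicative discount of $1-\beta$ is equivalent in expectation to Bernoulli termination---so your write-up is a strict elaboration of the same argument rather than a different one.
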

%\begin{proof}
This proof is mainly definitional, but we state it as an explicit proposition for clarity. 
%One can readily see how the definition of an RL task $(\Rfcn, \gamma, \ivec)$ 
%incorporates (Markov) options \citep[Section 2]{sutton1999between},
%which include a policy $\pi$, termination function $\beta: \States \rightarrow [0,1]$
%and an initiation set $\mathcal{I} \subset \States$ from which the option can be run. 
The discount function $\gamma(s,a,s') = 1-\beta(s')$ for all $s,a,s'$ specifies termination.
The interest function, $\ivec(s) = 1$ if $s \in \mathcal{I}$ and $\ivec(s) = 0$ otherwise,
focuses learning resources on the states of interest. If a value function for the policy is queried,
it would only make sense to query it from these states of interest. If
the policy for this option is optimized for this interest function, the policy should
only be run starting from $s \in \mathcal{I}$, as elsewhere will be poorly learned.
The rewards for the RL task correspond to the rewards associated with the MDP.
%\end{proof}

RL tasks generalize options, by generalizing
termination conditions to transition-based discounting and by providing degrees of interest
rather than binary interest. 
Further, the policies associated with RL subtasks can be used as macro-actions,
to specify a semi-Markov decision process \citep[Theorem 1]{sutton1999between}.
%A set of such macro-actions can be specified from each state, $\macroactionset{s}$,
%where some macro-actions $o$ can only be initiated from states of interest.
%%A policy $\mu$ can learn on these macro-actions, and value functions specified
%%for this policy, with associated transition-based discount $\gamma_\mu$ different
%%from the transition-based discounts specified for each transition-based option. 
%%For example, we can write the value function for interrupting options as follows,
%%where a macro-action is only followed for one step
%%%\begin{align*}
%%$\vvec_\mu(s) = \sum_{\macroaction \in O_s} \mu(s,\macroaction) [\rvec_{\macroaction}(s) + \mathbf{P}_{\macroaction,\gamma_\mu}(s,s') \vvec_\mu(s')].$
%%%\end{align*}
%%%
%The addition of options as macro-actions to the MDP has been shown to constitute a valid semi-Markov decision process \cite[Theorem 1]{sutton1999between}. This result easily extends to these more general RL tasks.
%%When considered as RL tasks, these transition-based options still constitute a valid semi-Markov decision process, 
%%as was previously proven for options \cite[Theorem 1]{sutton1999between}.
%This relation is important for practically using these subtasks as part of semi-Markov decision processes,
%for which we have standard learning and planning algorithms \citep{sutton1999between}.

%\section{Case studies demonstrating the utility of the general RL task specification}

%\subsection{General value functions and predictive questions}
\subsection{General value functions}

In a similar spirit of abstraction as options, general value functions were introduced for %to provide modular answers
single predictive or goal-oriented questions about the world \citep{sutton2011horde}.
The idea is to encode predictive knowledge in the form of value function predictions:
with a collection or horde of prediction demons, this constitutes knowledge \citep{sutton2011horde,modayil2014multi,white2015thesis}. 
The work on Horde \citep{sutton2011horde} and nexting \citep{modayil2014multi} provide numerous examples of the utility of the types of questions that
can be specified by general value functions, and so by RL tasks,
because general value functions can naturally can be specified as an RL task.

%As the formalism in this paper
%builds on this seminal work, general value functions naturally fit into the RL task formalism,
%albeit with several generalizations as well as simplifications. 
%
The generalization to RL tasks provide additional benefits for predictive knowledge. 
%General value functions can naturally can be specified as an RL task.
The separation into underlying MDP dynamics and task specification
is particularly useful in off-policy learning, with the Horde formalism, where many demons (value functions)
are learned off-policy. These demons share the underlying dynamics, and even feature representation, but have separate prediction and control tasks;
keeping these separate from the MDP is key for avoiding complications (see Appendix \ref{sec_advantages}).
%having a separate task specification is key to avoid 
%each with its
%own set of policies, rewards, interest and discount is important. Without transition-based discounting,
%for example, if state-based discounting was instead used, each demon could potentially have a different implicit set of
%states, different from each other, significantly complicating sharing of features. 
%
%The work on Horde \citep{sutton2011horde} and nexting \citep{modayil2014multi} provide numerous examples of the utility of the types of questions that
%can be specified by general value functions, and so by RL tasks.
%Due to space restrictions, we cannot recount them here. 
 Transition-based discounts, over state-based discounts, additionally enable the prediction of a \textit{change},
caused by transitioning between states.
Consider the taxi domain, described more fully in Section \ref{sec_taxi},
where the agent's goal is to pick up and drop off passengers in a grid world with walls.
The taxi agent may wish to predict the probability of hitting
a wall, when following a given policy. This can be encoded by setting $\gamma(s,a,s) = 0$
if a movement action causes the agent to remain in the same state, which occurs when trying to
move through a wall. 
In addition to episodic problems and hard termination,
transition-based questions also enable soft termination for transitions.
Hard termination uses $\gamma(s,a,s') = 0$ and soft termination
$\gamma(s,a,s') = \epsilon$ for some small positive value $\epsilon$. 
Soft terminations can be useful for incorporating some
of the value of a policy right after the soft termination. If two policies are equivalent
up to a transition, but have very different returns after the transition, a soft termination
will reflect that difference. We empirically demonstrate the utility of soft termination in the next section.

%TODO: should I uncomment?
%Hypothetical and real terminations have typically been considered
%separately. In off-policy learning, termination is hypothetical,
%since the state transitions are not affected, as opposed to real terminations by the behavior
%policy. This distinction, however, and the notion
%of teleportation as a different type of transition, is unnecessary,
%and could be a byproduct of treating episodic problems using
%absorbing states.
%With transition-based discounting, the target and behavior policy simply have different
%discount functions.
%The distinction of episodic and continuing can still be useful for defining problem classes,
%but with transition-based discounting, no longer needs different machinery.

\begin{figure*}[t]
\centering 
\vspace{-0.25cm}
\begin{subfigure}[t]{0.7\textwidth}
\hspace{-0.6cm}
 \includegraphics[width=1.0\textwidth]{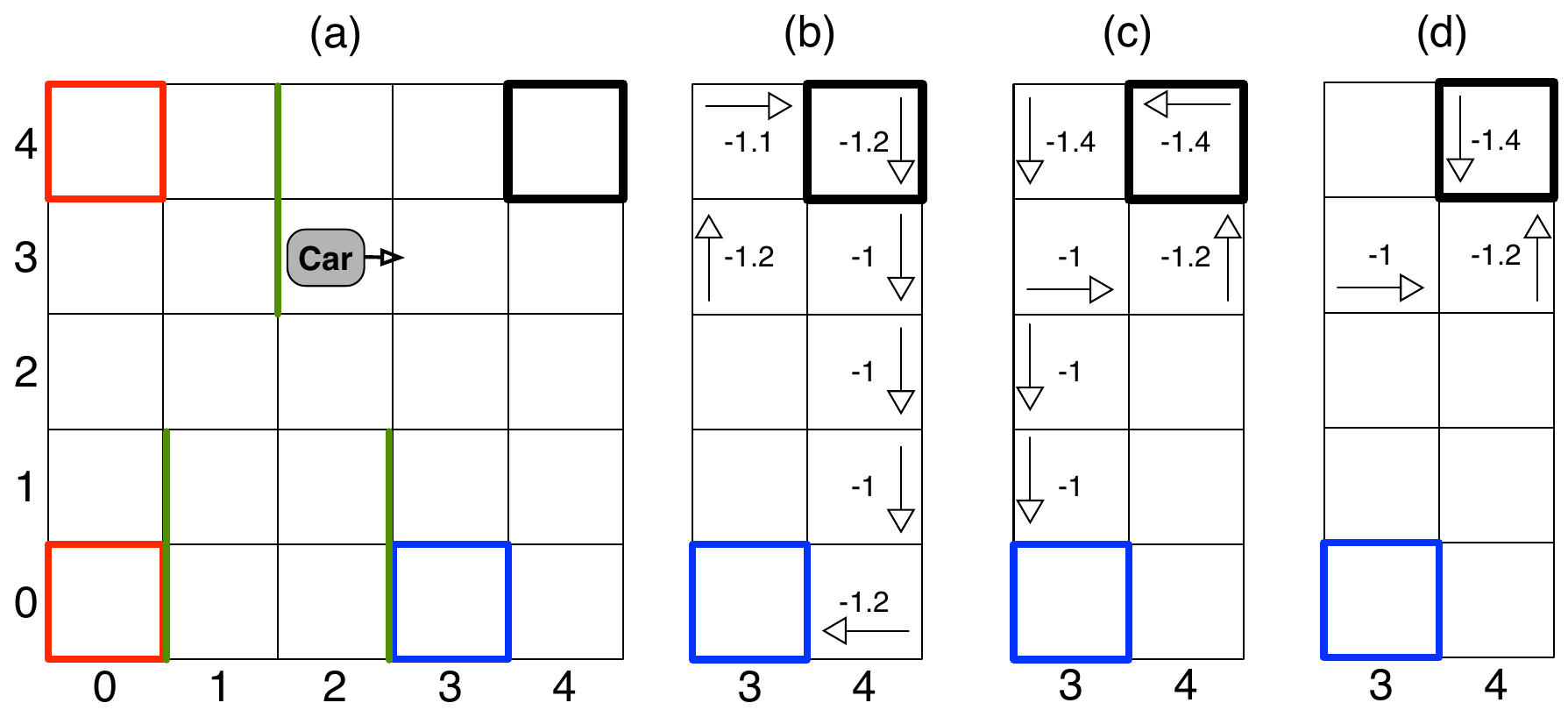}
\end{subfigure}
 \hspace{-0.6cm}
\begin{subfigure}[t]{0.28\textwidth}
\vspace{-5.4cm}
\begin{minipage}{1.0\textwidth}
\centering
\textbf{(e)}
\vspace{0.25cm}
\end{minipage}
%\subcaption[short for lof]{}
{\footnotesize
\setlength\tabcolsep{1pt}
\begin{sc}
%\begin{tabular}{|c|c|{>{\centering\arraybackslash}p{20mm}}|}
\begin{tabular}{|c|p{18mm}|p{16mm}|}
%\hline & & & constant $\gamma_c$ &
\hline
& {\scriptsize Total Pickup and Dropoff} & {\scriptsize Added Cost for Turns} \\
\hline
  {\scriptsize Trans-Soft}  & $7.74 \pm 0.03$ & $5.54 \pm 0.01$\\
  {\scriptsize  Trans-Hard} & $7.73 \pm 0.03$ & $5.83 \pm 0.01$\\
  {\scriptsize State-based} & $0.00 \pm 0.00$ & $18.8 \pm 0.02$\\
  \hline
  {\scriptsize $\gamma_c = 0.1$} & $0.00 \pm 0.00$ & $2.48 \pm 0.01$\\
  {\scriptsize $\gamma_c = 0.3$} & $0.02 \pm 0.01$ & $2.49 \pm 0.01$\\ 
  {\scriptsize $\gamma_c = 0.5$} & $0.04 \pm 0.01$ & $2.51 \pm 0.01$\\ 
  {\scriptsize $\gamma_c = 0.6$} & $0.03 \pm 0.01$ & $2.49 \pm 0.01$\\ 
 {\scriptsize $\gamma_c = 0.7$} & $7.12 \pm 0.03$ & $4.52 \pm 0.01$\\ 
  {\scriptsize $\gamma_c = 0.8$} & $7.34 \pm 0.03$ & $4.62 \pm 0.01$\\ 
  {\scriptsize $\gamma_c = 0.9$} & $3.52 \pm 0.06$ & $4.57 \pm 0.02$\\ 
  {\scriptsize $\gamma_c = 0.99$} & $0.01 \pm 0.01$ & $2.45 \pm 0.01$\\
\hline
\end{tabular}
\end{sc}
}
\end{subfigure}
%\begin{tabular}{|c|p{18mm}|p{18mm}|}
%%\hline & & & constant $\gamma_c$ &
%\hline
%& Total Pickup and Dropoff & Added Cost for Turns \\
%\hline
%  Trans-Soft  & $41.692 \pm 0.204913964666$ & $27.38575 \pm 0.0950413857548$\\
%  Trans-Hard & $41.568 \pm 0.213187750312$ & $28.98385 \pm 0.0808903980987$\\
%  State-based & $0.0 \pm 0.0$ & $98.07835 \pm 0.34398858567$\\
%  \hline
%  $0.1$ & $0.0 \pm 0.0$ & $12.62335 \pm 0.0174786020748$\\
%  $0.3$ & $0.127 \pm 0.0402512616846$ & $12.8547 \pm 0.119078603041$\\ 
%  $0.5$ & $0.241 \pm 0.0560518194245$ & $13.06875 \pm 0.161236441449$\\ 
%  $0.7$ & $38.27 \pm 0.219923878467$ & $22.93565 \pm 0.0747760718429$\\ 
%  $0.8$ & $39.549 \pm 0.210836226102$ & $23.49045 \pm 0.0860326722179$\\ 
%  $0.9$ & $4.491 \pm 0.293711333218$ & $22.00975 \pm 0.561332040297$\\ 
%  $0.99$ & $0.0 \pm 0.0$ & $12.6311 \pm 0.0184745549372$\\
%\hline
%\end{tabular}
%\begin{minipage}{0.05\textwidth}
%\end{minipage}
%\begin{subfigure}[t]{0.2\textwidth}
%\caption{}
%		\input{figures/taxi_closeopt.tex}
%		%\caption{}
%\end{subfigure}
%\begin{minipage}{0.05\textwidth}
%\end{minipage}
%\begin{subfigure}[t]{0.2\textwidth}
%\caption{}
%		\input{figures/taxi_closesub.tex}
%		%\caption{}
%\end{subfigure}
%\begin{minipage}{0.05\textwidth}
%\end{minipage}
%\begin{subfigure}[t]{0.2\textwidth}
%\caption{}
%		\input{figures/taxi_closestate.tex}
%		%\caption{}
%\end{subfigure}
 \caption{\small \textbf{(a)} The taxi domain, where the pickup/drop-off platforms are at (0,0), (0,4), (3,0) and (4,4). 
The Passenger P is at the source platform (4,4), outlined in black. The Car starts in (2,3), with orientation $E$ as indicated the arrow, needs to bring the passenger to destination D platform at (3,0), outlined in blue.
 In \textbf{(b) - (d)}, there are simulated trajectories for policies learned using hard and soft termination. \\
 \textbf{(b)} The optimal strategy, with $\gamma$(Car in source, Pickup, P in Car) = 0.1
 and a discount 0.99 elsewhere. The sequence of
 taxi locations are
 $(3,3)$, $(3,4)$, $(4,4)$, $(4,4)$ with Pickup action, $(4,3)$, $(4,2)$, $(4,1)$, $(4,0)$, $(3,0)$. Successful pickup and drop-off with total reward $-7.7$.\\
 \textbf{(c)} For $\gamma$(Car in source, Pickup, P in Car) = 0, 
 the agent does not learn the optimal strategy.
 The agent minimizes orientation cost to the subgoal, not accounting
 for orientation after picking up the passenger. 
 Consequently, it takes more left turns after pickup,
 resulting in more total negative reward. 
% because it does not care what orientation it approaches the passenger
% and so has to take more left turns after having to pick-up the passenger,
% which results in additional negative reward.
 The sequence of
 locations are
 $(3,3)$, $(4,3)$, $(4,4)$, $(4,4)$ with Pickup action, $(3,4)$, $(3,3)$, $(3,2)$, $(3,1)$, $(3,0)$.
		Successful pickup and drop-off with total reward $-8$.\\
 \textbf{(d)} For state-based $\gamma$(Car in source and P in Car) = 0, 
 the agent remains around the source and does not complete a successful drop-off.
    The sequence of
 locations are
  $(3,3)$, $(4,3)$, $(4,4)$, $(4,4)$ with Pickup action, $(4,3)$, $(4,4)$, $(4,3)$....
  The agent enters the source and pickups up the passenger.
  When it leaves to location (4,3), its value function
  indicates better value going to (4,4) because
  the negative return will again be cutoff by $\gamma$(Car in source and P in Car) = 0,
  even without actually performing a pickup. Since the cost to get to the destination
  is higher than the $-2.6$ return received from going back to $(4,4)$, the agent
  stays around $(4,4)$ indefinitely.\\
%  To appropriately learn this subgoal, without adding additional information
%  to the state, the change encoded by transition-based discounting is needed.
 \textbf{(e)} Number of successful passenger pickup and dropoff, as well as additional cost incurred from turns,
 over 100 steps, with 5000 runs, reported
 for a range of constant $\gamma_c$ and the policies in Figure \ref{fig_taxi}.
 %The evaluation rewards are modified to highlight ability to pickup and drop-off passengers, with +10 for successful drop-off and zero elsewhere, with
%negative additions for turns as before.
%with a passenger are as before, +10 for successful drop-off and zero elsewhere. %but no constant -1 reward per step, and no negative reward for taking turns without a passenger. 
  Due to numerical imprecision, several constant discounts do not get close enough to the passenger to pickup or drop-off. 
 The state-based approach, that does not add additional states for termination, oscillates
 after picking up the passenger, and so constantly gets negative reward.
 }\label{fig_taxi}
 \vspace{-0.3cm}
\end{figure*}

\section{Demonstration in the taxi domain}\label{sec_taxi}

\newcommand{\plat}{-----}
\newcommand{\offset}{-3}
\newcommand{\passenger}{\node at (\offset+4.8,4.8) {\small\textcolor{gray}{P}};}
\newcommand{\destination}{\node at (\offset+3.5,0.5) {\textcolor{gray}{D}};}

\newcommand{\gsize}{g}
\newcommand{\psize}{l}

To better ground this generalized formalism and provide some intuition,
%This generalized formalism can be daunting
%To make this formalism practically useful,
we provide a demonstration of RL task specification.
%the effects of different discounts
%including soft-termination and that these different discounts can be useful for control. 
We explore different transition-based discounts in
%Experiments were conducted on 
the taxi domain \citep{dietterich2000hierarchical,diuk2008anobject}.
%%This is a very natural domain for planning \citep{diuk2008anobject};
%%this previous work extended the taxi domain to be large and different walls.
%%Here, our goal is not planning, but rather 
%to demonstrate the utility of transition-based discounting and demonstrate
%that different policies are learned for different discount functions.
The goal of the agent is to take a passenger from a source platform 
to a destination platform, depicted in Figure \ref{fig_taxi}.
The agent receives a reward of -1 on each step,
except for successful pickup and drop-off, giving reward 0.
We modify the domain to include the orientation of the taxi,
with additional cost for not
continuing in the current orientation. 
This encodes that turning right, left or going backwards are more costly than going
forwards, with additional negative rewards of -0.05, -0.1 and -0.2 respectively. %right, left and backwards. 
This additional cost is further multiplied by a factor of 2 when there is a passenger in the vehicle. 
%representing the additional discomfort for a passenger. 
%The reward per step for going forward
%is -1, for right and left is -1.1 and for backward is -1.2.
%The pickup and drop-off rewards are 0. % +10 and the drop-off reward is +20. 
For grid size $\gsize$ and the number of pickup/dropoff locations $\psize$, 
the full state information
is a 5-tuple: 
($x$ position of taxi $\in \{1,\ldots,\gsize\}$,
$y$ position of taxi $\in \{1,\ldots,\gsize\}$, 
location of passenger $\in \{1,\ldots,\psize+1\}$, 
location of destination $\in \{1,\ldots,\psize\}$, 
orientation of car $\in \{N,E,S,W\}$
).
The location of the passenger can be in one of the pickup/drop-off locations,
or in the taxi. Optimal policies and value functions are computed iteratively, with an extensive number of iterations.

Figure \ref{fig_taxi} illustrates three policies for one part of the taxi domain,
obtained with three different discount functions. The optimal 
policy is learned using a soft-termination, which takes into consideration
the importance of approaching the passenger location with the right orientation
to minimize turns after picking up the passenger. A suboptimal policy is in fact
learned with hard termination, 
%as the additional negative reward incurred after
%picking up the passenger for having too many turns is not encoded into
%the value function, and 
as the policy prefers to greedily
minimize turns to get to the passenger. 
For further details, refer to the caption in Figure \ref{fig_taxi}.
% TODO: comment back in?
%Finally, to demonstrate the utility of transition-based discounts,
%we include a state-based discount that gets stuck after picking up
%the passenger. The state-based discount does not encode the notion of 
%a change.
%% that the important event is moving from the state being in the source 
%%but not having the passenger in the taxi to the state of being in the source
%%with the passenger in the taxi. 
%Instead, the event that terminates
%the trajectory is simply being in the source with the passenger in the taxi.
%This causes the agent to prefer the shorter trajectory of simply leaving the source and
%then coming right back into it, terminating the sequence of negative rewards
%in the return. 

We also compare to a 
 constant $\gamma$,
which corresponds to an average reward goal, as demonstrated in Equation \eqref{eq_average}. 
The table in Figure \ref{fig_taxi}(e) summarizes the results. 
Though in theory it should in fact recognize the relative values of orientation before
and after picking up a passenger, and obtain the same solution as the soft-termination
policy, in practice we find that numerical imprecision actually causes a suboptimal policy to be
learned.
%as seen in Figure \ref{fig_taxispecific}.
Because most of the rewards are negative per step, small differences in orientation
can be more difficult to distinguish amongst for an infinite discounted sum.
This result actually suggests that having multiple subgoals, as one might have with RL subtasks,
could enable better chaining of decisions and local evaluation of the optimal action.
The utility of learning with a smaller $\gamma_c$ has been previously
described \cite{jiang2015thedependence}, however, here we further advocate
that enabling $\gamma$ that provides subtasks is another strategy to improve learning. 

\section{Objectives and algorithms}
% is the per-state discount rate.
%The generalized definitions for transition-based $\gamma$ correspond
%to simply replacing $\gamma_s(s_{t+1})$ with $\gamma(s_{t}, s_{t+1})$.
With an intuition for the specification of problems as RL tasks, 
we now turn to generalizing some key algorithmic concepts to enable learning for RL tasks.
We first generalize the definition of the Bellman operator for the value function. 
For a policy $\pi: \States \times \Actions \rightarrow [0,1]$,
define $\Ppi, \Pgam \in \RR^{\nstates \times \nstates}$ and $\rpi, \vvpi \in \RR^\nstates$,
indexed by states $s,s' \in \States$,
\small
\begin{align*}
&\Ppi(s,s') \defeq \sum_{a\in\Actions} \pi(s,a)\Pfcn(s,a,s') \\
&\Pgam(s,s') \defeq \sum_{a\in\Actions} \pi(s,a)\Pfcn(s,a,s') \gamma(s,a,s')\\
 & \rpi(s) \defeq \sum_{a\in\Actions} \pi(s,a) \sum_{s'\in\States} \Pfcn(s,a,s')\Rfcn(s,a,s')\\
  &\vvpi(s) 
\defeq \rpi(s) +  \sum_{s' \in \States} \Pgam(s,s')  \vvpi(s')
  .
\vspace{-0.3cm}
\end{align*}
\normalsize
%
%If $\gamma(s,a,s') = \gamma(s,b,s')$ $\forall a,b \in \Actions$,
%then $\Pgam(s,s') = \Ppi (s,s') \Gammamat(s,s')$ with $\Gammamat(s,s') = \gamma(s,a,s')$.
%
%The return is the discounted sum of future rewards, discounted by the $\gamma$ function,
%assuming actions are selected according to $\pi$.
%The value $\vvpi \in \RR^{\nstates}$ 
where $\vvpi(s)$
is the expected return, starting from a state $s \in \States$.
To compute a value function that satisfies this recursion,
we define a Bellman operator.
% the specifies this recursion
%and find the fixed point.
%
The Bellman operator has been generalized to include state-based 
discounting and a state-based trace parameter\footnote{A generalization
to state-based trace parameters has been considered \citep{sutton1995td,sutton1998reinforcement,maei2010gq,sutton2014anew,yu2012least}.}
\citep[Eq. 29]{sutton2016anemphatic}. 
We further generalize the definition to the transition-based setting.
The trace parameter $\lambda: \States \times \Actions \times \States \rightarrow [0,1]$
influences the fixed point and provides a modified (biased) return, 
called the $\lambda$-return; this parameter is typically motivated as a bias-variance trade-off parameter \citep{kearns2000bias}. 
%Such a generalization has recently been shown to be of benefit, with 
Because the focus of this work is on generalizing the discount, we opt for a simple constant $\lambda_c$
in the main body of the text; we provide generalizations to transition-based trace parameters in the appendix.

The generalized Bellman operator $\Tl:\RR^\nstates\rightarrow\RR^\nstates$
is
\begin{align}
 \Tl \vvec &\defeq \rl+\Pl \vvec, ~~~~\forall \vvec\in\RR^\nstates \label{eq_bellman}\\
\text{where } \ \ \ \ \ \Pl &\defeq\left(\eye - \lambda_c\Pgam\right)^{-1} \Pgam (1-\lambda_c)\\
\rl &\defeq \left(\eye-\lambda_c\Pgam\right)^{-1} \rpi \nonumber
\end{align}
%
%The symbol $\hada$ denotes element-wise product (Hadamard product), with $\Lambdamat \in \RR^{\nstates \times \nstates}$ 
%s.t. $\Lambdamat(s,s') = \lambda(s,s')$.

To see why this is the definition of the Bellman operator,
we define the expected $\lambda$-return, $\vvlambda \in \RR^{\nstates}$
for a given approximate value function, given by a vector $\vvec \in \RR^{\nstates}$.
%As above, these vectors are indexed by states $s \in \States$.
%
\begin{align*}
\!\!\!\vvlambda(s)  
&\defeq \rpi(s) \!+\!\! 
 \sum_{s' \in \States} \!\Pgam(s,s')
 \left[(1\!-\!\lambda_c)\vvec(s') \!+\! \lambda_c \vvlambda(s') \right]\\
%&\left[ \Rfcn(s, a, s') + \gamma(s,s') \left((1-\lambda(s,s'))\vvec(s') + \lambda(s,s') \vvlambda(s') \right)\right]\\
%&=   \rpi(s) + \sum_{s' \in \States}\Ppi(s,s') \gamma(s,s') (1-\lambda(s,s'))\vvec(s') \\
%&+\sum_{s' \in \States}\Ppi(s,s') \gamma(s,s')  \lambda(s,s') \vvlambda(s')\\
&=  \rpi(s) + (1-\lambda_c)\Pgam(s,:) \vvec 
+\lambda_c\Pgam(s,:) \vvlambda
%(\rpi + \Pgam \hada \vvlambda
.
\end{align*}
Continuing the recursion, we obtain\footnote{For a matrix $\Mmat$ with maximum eigenvalue less than 1, $\sum_{i=0}^\infty \Mmat^i = (\eye - \Mmat)^{-1}$. We show in Lemma \ref{lem_eigs} that $\Pgam$ satisfies this condition, implying $\lambda_c\Pgam$ satisfies this condition and so this infinite sum is well-defined.}
\begin{align*}
\vvlambda
&= \left[\sum_{i=0}^\infty (\lambda_c\Pgam)^i \right]\left(\rpi + (1-\lambda_c)\Pgam {\vvec} \right)\\
&= (\eye - \lambda_c\Pgam)^{-1} \left(\rpi + (1-\lambda_c)\Pgam{\vvec} \right)
= \Tl \vvec
\end{align*}
The fixed point for this formula satisfies
$\Tl \vvec = \vvec$ for the Bellman operator defined in Equation \eqref{eq_bellman}.
%For the tabular setting, the fixed points are equivalent for different $\lambda_c$,
%or for generalized transition-based $\lambda$;
%with function approximation, however, the fixed points are likely to be different.
%
%\begin{align*}
%\hat{\vvec}
%&= (\eye - \Pgamlam)^{-1} \rpi \\
%& + \left(\eye - \Pgamlam \right)^{-1} \Pgamlamminus \hat{\vvec}
%\end{align*}
%
%which exactly corresponds to
%$\Tl \vvec = \vvec$ for the Bellman operator defined in Equation \eqref{eq_bellman}.
%
%\begin{align*}
%\Pl &=\left(\eye - \Pgamlam \right)^{-1} \Pgamlamminus 
%%&=\left(\eye - \Pgamlam \right)^{-1} (\Pgam -  \Pgamlam) \\
%%&=\left(\eye - \Pgamlam \right)^{-1} (\Pgam -\eye + \eye -  \Pgamlam) \\
%%    &= \eye - (\eye-\Ppi \Gammamat)  (\eye-\Ppi \Gammamat\hada\Lambdamat)^{-1}\\
%%    &= \Pl
%\end{align*}
%See Lemma \ref{lem_probmatrix} for a justification of this equality.

To see how this generalized Bellman operator modifies the algorithms, we consider 
the extension to temporal difference algorithms. 
%The next step is to obtain algorithms to find this fixed point. 
Many algorithms can be easily generalized by replacing $\gamma_c$ or
$\gamma_s(s_{t+1})$ with transition-based $\gamma(s_t, a_t, s_{t+1})$.
For example, the TD algorithm is generalized by setting the discount
on each step to $\gamma_{t+1} = \gamma(s_t,a_t,s_{t+1})$, %$\lambda_{t+1} = \lambda(s_t,a_t,s_{t+1})$:
\begin{align}
\wvec_{t+1} &= \wvec_t + \alpha_t \delta_t \evec_t \hspace{1.5cm} \triangleright \text{ for some step-size $\alpha_t$} \nonumber\\
\delta_t &\defeq r_{t+1} + \gamma_{t+1} \xvec(s_{t+1})^\top \wvec - \xvec(s_{t})^\top \wvec \label{eq_td}\\
\evec_t &= \gamma_t \lambda_c \evec_{t-1} + \xvec(s_t)\nonumber
.
\end{align}
%
%We can further generalize to incorporate the level of interest in a state. 
%To see why, 
The generalized TD fixed-point, under linear function approximation, can be expressed 
as a linear system
$\Amat \wvec = \bvec$
%To generalize to transition-based $\gamma$
%and $\lambda$,
%
\begin{align*}
\Amat &= \Xmat^\top \Dmat (\eye - \lambda_c\Pgam)^\inv (\eye -  \Pgam)  \Xmat \\
\bvec &= \Xmat^\top \Dmat (\eye - \lambda_c\Pgam)^\inv \rpi
\end{align*}
where each row in $\Xmat \in \RR^{\nstates \times \xdim}$ corresponds to features for a state,
%with feature function $\xvec: \States \rightarrow \RR^\xdim$,
and $\Dmat \in \RR^{\nstates \times \nstates}$ is a diagonal weighting matrix.
Typically, $\Dmat = \diag(\dmu)$, where $\dmu \in \RR^{\nstates}$ is the stationary distribution for the behavior policy $\mu: \States \times \Actions \rightarrow [0,1]$ generating the stream of interaction. In on-policy learning, $\dmu = \dpi$.  
%For on-policy learning, $\Dmat = \diag(\dpi)$ and for off-policy learning, $\Dmat = \diag(\dmu)$.
With the addition of the interest function, this weighting changes to $\Dmat = \diag(\dmu \hada \ivec)$,
where $\hada$ denotes element-wise product (Hadamard product). 
More recently, a new algorithm, emphatic TD (ETD) \citep{mahmood2015emphatic,sutton2016anemphatic} specified
yet another weighting, $\Dmat = \Mmatpi$ where $\Mmatpi = \diag(\mvec)$ with $\mvec = (\eye - \Pl)^{-1} (\dmu\hada\ivec)$.
Importantly, even for off-policy sampling, with this weighting, $\Amat$ is guaranteed to be positive definite. 
We show in the next section that the generalized Bellman operator for both the on-policy and emphasis weighting is a contraction,
and further in the appendix that the emphasis weighting with a transition-based trace function is also a contraction.  
%Therefore, it constitutes the currently most general algorithm for learning RL tasks,
%and we put more focus on understanding its properties for this generalization. % in the next section.

%\section{Analysis of contraction properties and convergence for RL tasks}\label{sec_proof}
\section{Generalized theoretical properties}\label{sec_proof}

In this section, we provide a general approach to incorporating transition-based discounting
into approximation bounds.
Most previous bounds have assumed a constant discount.
%, partially because
%the matrix variable $\Pgam$ complicates analysis and norm computations. 
For example, ETD 
was introduced with state-based $\gamma_s$; however, 
\cite{hallak2015generalized} analyzed approximation error bounds of ETD
%---due to the trace and an additional parameter---they assumed
using a constant discount $\gamma_c$. 
%If this variable had been approached more generally with transition-based discount,
%then the result would automatically extend to episodic problems; as is, the result is strictly for continuing problems.   
By using matrix norms on $\Pgam$, we generalize previous approximation bounds
%contraction constants for the Bellman operator and discuss several examples of how previous
%algorithmic results then generalize 
to both the episodic and continuing case.

Define the set of bounded vectors
for the general space of value functions
$\Fallv = \{ \vvec \in \RR^{\nstates}: \| \vvec \|_{\Dmu} < \infty\}$.
%We assume throughout that we have some
%subspace
Let
$\Fsubspace \subset \Fallv$ be a subspace
of possible solutions, 
%such as is created
%by linear function approximation, 
e.g., $\Fsubspace = \{ \Xmat \wvec |  \wvec \in \RR^\xdim, \| \wvec \|_2 < \infty\}$.

%\subsection{List of Assumptions}

\newcommand{\asseigs}{A3}
\newcommand{\Pmu}{\mathbf{P}_{\mu}}

%
%The following are the explicit assumptions used throughout
%the proof. 
%
%\begin{enumerate}[leftmargin=*]
%\vspace{-0.5cm}
%\setlength{\itemsep}{0pt}
%  \setlength{\parskip}{0pt}
\begin{enumerate}
\item[A1.] The action space $\Actions$ and state space $\States$ are finite.
\item[A2.] 
%The Markov chain $\{\state_t\}$ is irreducible and aperiodic.
%A finite-state irreducible aperiodic Markov chain
%is ergodic (positive recurrent and aperiodic);
%therefore,
%there is a unique invariant distribution, $\dpi$ that satisfies $\dpi \Ppi = \dpi$.
For polices $\mu,\pi:\States \times \Actions \rightarrow[0,1]$,
there exist unique invariant distributions $\dmu,\dpi$ such that
$\dpi \Ppi = \dpi$ and $\dmu \Pmu = \dmu$.
This assumption is typically satisfied by assuming an ergodic Markov chain
for the policy. 
%this distribution is called the steady-state or invariant distribution.
\item[A3.] 
There exist transition $s,a,s'$ such that $\gamma(s,a,s') < 1$
and $\pi(s,a)P(s,a,s') > 0$. This assumptions states that the policy 
reaches some part of the space where the discount is less than 1.
\item[A4.] 
Assume for any $\vvec \in \Fsubspace$, if $\vvec(s) = 0$ for all $s \in \States$
where $\ivec(s) > 0$, then $\vvec(s) = 0$ for all $s \in \States$ s.t. $\ivec(s) = 0$.
For linear function approximation, this requires
$\mathcal{F} = \text{span}\{ \xvec(s) : s \in \States, \ivec(s) \neq 0\}$. 
\end{enumerate}
For weighted norm $\| \vvec \|_{\Dmat} = \sqrt{\vvec^\top \Dmat \vvec}$, 
if we can take the square root of $\Dmat$,
the induced matrix norm is
$\| \Pl \|_{\Dmat} = \spnorm{ \Dmat^{1/2} \Pl \Dmat^{1/2} }$,
where the spectral norm $\spnorm{\cdot}$ is
the largest singular value of the matrix. 
For simplicity of notation below, define $\sbound{\Dmat} \defeq \| \Pl \|_{\Dmat}$.
For any diagonalizable, nonnegative matrix $\Dmat$, 
the projection $\Proj_{\Dmat}: \Fallv \rightarrow \Fsubspace$ onto $\Fsubspace$ exists
and is defined
$\Proj_{\Dmat}\zvec = \argmin_{\vvec \in \Fsubspace} \| \zvec - \vvec \|_\Dmat$.
%for squared norm weighted by $\Dmat$
 
 \subsection{Approximation bound}
 
 We first prove that the generalized Bellman operator
in Equation \ref{eq_bellman} is a contraction.
%and bound bias incurred due to $\lambda$.
 We extend the bound from \citep{tsitsiklis1997ananalysis,hallak2015generalized} for
 constant discount and constant trace parameter,
 to the general transition-based setting.
%The idea is to bound the difference between true value function and 
%the fixed point, biased by $\lambda$, in terms of the 
%difference between the true value function and the closest point
%in the set of possible value functions to the true value function. 
The normed difference to the true value function 
%and the solution,
%due to $\lambda$ bias and function approximation, 
could be defined
by multiple weightings. 
%The norm could be defined in terms of many weightings.
%To-date, we only have proofs that the Bellman operator is
%a contraction in a few options. 
A well-known result is that for $\Dmat = \Dpi$
the Bellman operator is a contraction for constant $\gamma_c$ and $\lambda_c$ \citep{tsitsiklis1997ananalysis};
recently, this has been generalized for a variant of ETD to $\Mmat$, still with constant parameters \citep{hallak2015generalized}. 
We extend this result for transition-based $\gamma$ 
 for both $\Dpi$ and the
transition-based emphasis matrix $\Mmat$.
%
%For a given diagonalizable nonnegative matrix $\Dmat$, 
%define $\sbound{\Dmat} = \| \Pl \|_{\Dmat}$. We obtain the following
%results showing that $\Pl$ is a contraction operator for $\Dpi$ and $\Mmatpi$. 
%
\newcommand{\normrow}{\boldsymbol{\xi}}

\begin{lemma}\label{lem_sbound}
For $\Dmat = \Dpi$ or $\Dmat = \Mmatpi$,
\begin{equation*}
\sbound{\Dmat} = \| \Pl \|_{\Dmat} < 1
.
\end{equation*}
\end{lemma}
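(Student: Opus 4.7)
The plan is to establish the non-expansion $\|\Pl\|_{\Dmat}\le 1$ via Jensen's inequality using a left-supermeasure property of the weighting, and then use A3 combined with irreducibility (A2) to upgrade to strict inequality. I would proceed in three steps.

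First, I would show that $\Pl$ is substochastic and admits both $\dpi$ and $\mvec$ as left supermeasures, i.e.\ $\dpi^\top\Pl\le\dpi^\top$ and $\mvec^\top\Pl\le\mvec^\top$ entrywise. Since $\gamma(s,a,s')\le 1$, the matrix $\Pgam$ is dominated entrywise by $\Ppi$, and hence every power satisfies $\Pgam^{k+1}\le\Ppi^{k+1}$. Using the series expansion
\begin{align*}
\Pl = (1-\lambda_c)\sum_{k=0}^\infty \lambda_c^k\,\Pgam^{k+1},
\end{align*}
convergent by Lemma~\ref{lem_eigs}, we get $\Pl\le(1-\lambda_c)\sum_k\lambda_c^k\Ppi^{k+1}$ entrywise. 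Combined with $\dpi^\top\Ppi^k=\dpi^\top$, this yields $\dpi^\top\Pl\le\dpi^\top$. For the emphasis weighting, the analogous supermeasure property follows directly from the defining identity for $\mvec$, which encodes the emphatic weighting relative to $\Pl$.

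Second, I would apply Cauchy--Schwarz with the measure $\Pl(s,\cdot)$. Letting $g(s)=\sum_{s'}\Pl(s,s')\le 1$ and writing $d(s)$ for either $d_\pi(s)$ or $m(s)$,
\begin{align*}
(\Pl\vvec)(s)^2 \le g(s)\sum_{s'}\Pl(s,s')\vvec(s')^2 \le \sum_{s'}\Pl(s,s')\vvec(s')^2.
\end{align*}
Weighting by $d(s)$, summing over $s$, and using the supermeasure property,
\begin{align*}
\|\Pl\vvec\|_{\Dmat}^2 \le \sum_{s'}\vvec(s')^2\sum_s d(s)\Pl(s,s') \le \sum_{s'} d(s')\vvec(s')^2 = \|\vvec\|_{\Dmat}^2,
\end{align*}
establishing the non-strict bound $\|\Pl\|_{\Dmat}\le 1$.

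The hard part---and the main obstacle---is upgrading this to strict inequality. By A3 there is a transition $(s^\star,a^\star,s')$ with $\gamma(s^\star,a^\star,s')<1$ reached with positive probability, so that $(\Pgam\onevec)(s^\star)<1$ and, via the series, $g(s^\star)=(\Pl\onevec)(s^\star)<1$ strictly at some $s^\star$ with $d(s^\star)>0$. For the chain above to be tight at a unit vector $\vvec$, Cauchy--Schwarz must be tight at every state $s$ in the $\Dmat$-support---forcing $\vvec$ to be constant on each row support of $\Pl$---and simultaneously the supermeasure step must be tight. A2 (irreducibility) propagates the constancy constraint across the support, while A4 rules out the degenerate case where $\vvec$ vanishes on $\ivec$-mass; together with the strict defect $g(s^\star)<1$ these produce a contradiction. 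Finite-dimensional compactness of the $\|\cdot\|_{\Dmat}$-unit sphere then promotes this pointwise strict inequality to the uniform bound $\|\Pl\|_{\Dmat}<1$. The delicate piece is carefully tracking how the resolvent $(\eye-\lambda_c\Pgam)^{-1}$ transports the row defect of $\Pgam$ through the supports of $\Dpi$ and $\Mmatpi$ so that the propagation argument goes through for both weightings in a unified way.
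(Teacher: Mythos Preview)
Your non-strict bound (step~2) matches the paper's core device exactly: Jensen applied to the normalized rows of $\Pl$, followed by the left-supermeasure property of the weighting. The divergence is entirely in how strictness is obtained.

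For $\Dmat=\Mmatpi$ the paper is much more direct than your compactness route. The defining relation you allude to is $\mvec^\top(\eye-\Pl)=(\dvec\hada\ivec)^\top$, and the paper uses it explicitly: after the Jensen step one has
\[
\|\Pl\vvec\|_{\Mmatpi}^2 \le \vvec^\top\diag(\mvec^\top\Pl)\vvec
= \|\vvec\|_{\Mmatpi}^2 - \vvec^\top\diag(\dvec\hada\ivec)\vvec,
\]
so the inequality is strict for any $\vvec\neq\zerovec$ by A4. No row-defect tracking, no propagation, no compactness.

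For $\Dmat=\Dpi$ the paper does \emph{not} argue by compactness either. It claims a uniform scalar bound $\dpi^\top\Pgam\vvec_+\le s\,\dpi^\top\vvec_+$ for some $s<1$ and all $\vvec_+\ge 0$, iterates through the resolvent series, and obtains the explicit rate $\|\Pl\|_{\Dpi}\le\frac{s(1-\lambda_c)}{1-s\lambda_c}<1$. This is a quantitatively stronger conclusion than your qualitative $<1$.

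Your step~3 is where the real concern lies. The tightness conditions from Jensen constrain $\vvec$ on the row supports of $\Pl$, not of $\Ppi$; because $\gamma$ may vanish on transitions, $\Pl$ need not inherit the irreducibility of $\Ppi$, so ``A2 propagates the constancy constraint'' is not justified as stated. Concretely, on the two-state swap chain $\Ppi=\bigl[\begin{smallmatrix}0&1\\1&0\end{smallmatrix}\bigr]$ with $\gamma(2,\cdot,1)=\tfrac12$ and $\gamma(1,\cdot,2)=1$, $\lambda_c=0$, one has $\Pl=\Pgam=\bigl[\begin{smallmatrix}0&1\\1/2&0\end{smallmatrix}\bigr]$, $\dpi=[\tfrac12,\tfrac12]$, and $\vvec=[0,1]^\top$ achieves $\|\Pl\vvec\|_{\Dpi}=\|\vvec\|_{\Dpi}$: your row-defect $g(2)<1$ forces $\vvec(1)=0$ but nothing in your sketch forces $\vvec(2)=0$. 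So the propagation step needs a genuine additional ingredient (e.g.\ aperiodicity, or an argument through powers of $\Pl$) that you have not supplied; as written the compactness upgrade does not close.
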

  \begin{proof}
For $\Dmat = \Mmatpi$:
let $\normrow \in \RR^\nstates$ be the vector
of row sums for $\Pl$: $\Pl \onevec = \normrow$. 
Then for any $\vvec \in \Fallv$, with $\vvec \neq \zerovec$,
\begin{align*}
\| \Pl \vvec\|_{\Mmatpi}^2 &= \sum_{s \in \States} \mvec(s) \left( \sum_{s' \in \States} \Pl(s,s') \vvec(s')\right)^2\\
&= \sum_{s \in \States} \mvec(s) \normrow(s)^2 \left( \sum_{s' \in \States} \frac{\Pl(s,s')}{\normrow(s)} \vvec(s')\right)^2\\
&\le \sum_{s \in \States} \mvec(s) \normrow(s)^2  \sum_{s' \in \States} \frac{\Pl(s,s')}{\normrow(s)} \vvec(s')^2\\
&= \sum_{s' \in \States} \vvec(s')^2 \sum_{s \in \States} \mvec(s) \normrow(s) \Pl(s,s') \\
%&= \sum_{s' \in \States} \vvec(s')^2  (\mvec \hada \normrow)^\top \Pl(:,s')\\
&= \vvec^\top  \diag\left( (\mvec \hada \normrow)^\top \Pl \right) \vvec
\end{align*}
where the first inequality follows from Jensen's inequality, because $\Pl(s,:)$ is normalized.
Now because $\normrow$ has entries that are less than 1, because the
row sums of $\Pl$ are less than 1 as shown in Lemma \ref{lem_probmatrix},
and because each of the values in the above product are nonnegative,
\begin{align*}
\vvec^\top & \diag\left( (\mvec \hada \normrow)^\top \Pl \right) \vvec\\
&\le \vvec^\top  \diag\left(\mvec^\top \Pl \right)  \vvec\\
&= \vvec^\top  \diag\left(\mvec^\top (\Pl - \eye) + \mvec^\top  \right)  \vvec\\
&= \vvec^\top  \diag\left(-(\dmu\hada\ivec)^\top + \mvec^\top  \right)  \vvec\\
&= \vvec^\top  \diag\left( \mvec^\top  \right)  \vvec
-  \vvec^\top  \diag\left((\dmu\hada\ivec)^\top  \right)  \vvec\\
&< \| \vvec \|_{\Mmatpi}^2
\end{align*}
The last inequality is a strict inequality because 
$\dmu\hada\ivec$ has at least one positive entry where $\vvec$ has a positive entry.
Otherwise, if $\vvec(s) = 0$ everywhere with $\ivec(s) > 0$, then $\vvec = \zerovec$,
which we assumed was not the case. 

Therefore, $\| \Pl \vvec\|_{\Mmat} <   \| \vvec \|_{\Mmat}$ for any $\vvec \neq \zerovec$,
giving 
$\| \Pl \|_{\Mmat} \defeq \max_{\vvec \in \RR^{\nstates}, \vvec \neq 0} \frac{\| \Pl \vvec\|_{\Mmat}}{\| \vvec \|_{\Mmat}}
< 1$. 
This exact same proof follows through verbatim for the generalization of $\Pl$
to transition-based trace $\lambda$. 

For $\Dmat = \Dpi$:
Again, we use Jensen's inequality, but now rely on the property $\dpi \Ppi = \dpi$.
Because of Assumption A3, for some $s < 1$, for any non-negative $\vvec_+$,
\begin{align*}
\dpi \Pgam \vvec_+ &=
\sum_{s} \sum_a \dpi(s) \Pfcn(s,a,:) \pi(s,a) \gamma(s,a,:) \vvec_+ \\
&\le s \sum_{s} \sum_a \dpi(s) \Pfcn(s,a,:) \pi(s,a)\vvec_+ 
= s\dpi \vvec
.
\end{align*}
Therefore, because vectors $\Pgam \vvec_+$ are also non-negative,
\begin{align*}
\dpi \Pl \vvec_+ &= \dpi \left(\sum_{k=0}^\infty (\Pgam \lambda_c)^k \Pgam (1-\lambda_c) \right) \vvec_+\\
&\le  (1-\lambda_c) \sum_{k=0}^\infty (s \lambda_c)^k \dpi\Pgam \vvec_+\\
&\le  (1-\lambda_c) (1-s\lambda_c)^\inv s\dpi \vvec_+
\end{align*}
\vspace{-0.2cm}
% 
%By Lemma \ref{lem_eigs}, we know $s = \maxlam(\Pgam)< 1$.
%
and so
\begin{align*}
\| \Pl \vvec\|_{\Dpi}^2 
%&= \sum_{s \in \States} \dpi(s) \left( \sum_{s' \in \States} \Pl(s,s') \vvec(s')\right)^2\\
&\le \sum_{s \in \States} \dpi(s) \normrow(s)^2  \sum_{s' \in \States} \frac{\Pl(s,s')}{\normrow(s)} \vvec(s')^2\\
&= \sum_{s' \in \States} \vvec(s')^2 \sum_{s \in \States} \dpi(s) \normrow(s) \Pl(s,s') \\
&\le \sum_{s' \in \States} \vvec(s')^2 \sum_{s \in \States} \dpi(s) \Pl(s,s') \\
&\le \tfrac{s (1-\lambda_c)}{1-\lambda_c s}\sum_{s' \in \States} \dvec(s') \vvec(s')^2 \\
&\le \tfrac{s- s\lambda_c}{1-\lambda_c s}\| \vvec \|_{\Dpi}^2
\end{align*}
where $\tfrac{s- s\lambda_c}{1-\lambda_c s} < 1$ since $s < 1$. 
 \end{proof}
 
 \begin{lemma}\label{theorem_bellman}
 Under assumptions A1-A3,
the Bellman operator $\Tl$ in Equation \eqref{eq_bellman} is a contraction
under a norm weighted by $\Dmat = \Dpi$ or $\Dmat = \Mmat_\pi$, i.e.,
for $\vvec \in \Fallv$ % = \{ \vvec \in \RR^{\nstates}: \| \vvec \|_{\Dpi} < \infty\}$,
\begin{align*}
\| \Tl \vvec \|_{\Dmat} < \| \vvec \|_{\Dmat}
.
\end{align*}
%where $\sbound{\Dpi} = \| \Pl \|_{\Dpi} < 1$.
%and so has a unique fixed point in $\Fallv$.
Further, because the projection $\Proj_\Dmat$ is a contraction,
$\Proj_\Dmat \Tl$ is also a contraction and has a unique fixed point $\Proj_\Dmat \Tl \vvec = \vvec$
for $\vvec \in \Fsubspace$.
%Therefore, a unique fixed point $\Tl \vvec^* = \vvec^*$ exists. 
 \end{lemma}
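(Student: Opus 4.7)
The plan is to reduce the lemma to the bound $\|\Pl\|_{\Dmat} < 1$ already established in Lemma \ref{lem_sbound}, and then invoke a standard projected Banach fixed-point argument. First, observe that since $\Tl \vvec = \rl + \Pl \vvec$ is affine, the reward term $\rl$ cancels in any difference: for any $\vvec,\uvec \in \Fallv$,
\begin{align*}
\Tl \vvec - \Tl \uvec = \Pl(\vvec - \uvec).
\end{align*}
Applying the induced matrix norm $\|\cdot\|_{\Dmat}$ and setting $c = \|\Pl\|_{\Dmat}$, Lemma \ref{lem_sbound} gives $c < 1$, so $\|\Tl\vvec - \Tl\uvec\|_{\Dmat} \le c\,\|\vvec - \uvec\|_{\Dmat}$, i.e., $\Tl$ is a strict contraction on $\Fallv$ in both weightings $\Dmat = \Dpi$ and $\Dmat = \Mmatpi$. (The displayed inequality $\|\Tl\vvec\|_\Dmat < \|\vvec\|_\Dmat$ in the lemma statement is the homogeneous reading, which follows immediately by setting $\uvec = \zerovec$ and absorbing $\rl$ — the essential content is the contraction-with-respect-to-differences that I will prove.)

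Next I would handle the projected operator $\Proj_\Dmat \Tl$. Because $\Proj_\Dmat$ is the orthogonal projection onto the closed subspace $\Fsubspace$ with respect to the inner product induced by $\Dmat$, it is non-expansive in $\|\cdot\|_{\Dmat}$: $\|\Proj_\Dmat \zvec - \Proj_\Dmat \zvec'\|_{\Dmat} \le \|\zvec - \zvec'\|_{\Dmat}$ for all $\zvec,\zvec' \in \Fallv$. Composing with the contraction from the previous step,
\begin{align*}
\|\Proj_\Dmat \Tl \vvec - \Proj_\Dmat \Tl \uvec\|_{\Dmat} \le \|\Tl\vvec - \Tl\uvec\|_{\Dmat} \le c\,\|\vvec - \uvec\|_{\Dmat},
\end{align*}
so $\Proj_\Dmat \Tl:\Fsubspace \to \Fsubspace$ is a strict contraction with the same modulus $c<1$. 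Since $\Fsubspace$ is a closed subspace of the weighted $L^2$ space $(\Fallv,\|\cdot\|_{\Dmat})$ and hence complete, the Banach fixed-point theorem yields a unique $\vvec \in \Fsubspace$ with $\Proj_\Dmat \Tl \vvec = \vvec$.

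The only subtlety worth flagging is that $\Proj_\Dmat$ is genuinely an orthogonal projection and is thus non-expansive: this requires $\Dmat$ to induce a valid (semi)inner product, which is exactly why the lemma is phrased for $\Dmat = \Dpi$ (a diagonal positive matrix under A2) and $\Dmat = \Mmatpi$ (diagonal and, by assumption A4 together with the construction $\mvec = (\eye - \Pl)^\inv (\dmu \hada \ivec)$, non-negative on the relevant support). Assumption A4 further ensures that if $\vvec \in \Fsubspace$ vanishes on the $\ivec$-support then $\vvec = \zerovec$, making $\|\cdot\|_{\Dmat}$ a genuine norm on $\Fsubspace$ rather than merely a seminorm. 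The main obstacle, therefore, is not the fixed-point argument itself — which is routine once one has the operator norm bound — but rather making sure the ambient weighted geometry is well-defined in both the on-policy ($\Dpi$) and emphatic ($\Mmatpi$) cases; both have already been taken care of by Lemma \ref{lem_sbound} and the preceding assumptions, so the remainder of the proof is essentially a two-line composition.
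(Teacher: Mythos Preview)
Your proposal is correct and follows essentially the same route as the paper: reduce the contraction of $\Tl$ to the operator-norm bound $\|\Pl\|_{\Dmat}<1$ from Lemma~\ref{lem_sbound} via the affine cancellation $\Tl\vvec-\Tl\uvec=\Pl(\vvec-\uvec)$, then compose with the non-expansive projection and invoke Banach. Your write-up is in fact more careful than the paper's (which is a two-line proof), adding the justification that $\Proj_\Dmat$ is non-expansive and that $\Fsubspace$ is complete.
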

 \begin{proof}
For any two vectors $\vvec_1, \vvec_2$
\begin{align*}
\| \Tl \vvec_1 - \Tl \vvec_2 \|_{\Dmat} 
&= \| \Pl (\vvec_1 - \vvec_2) \|_{\Dmat}\\
&\le
\| \Pl \|_{\Dmat} \| \vvec_1 - \vvec_2 \|_{\Dmat}\\
&<  \| \vvec_1 - \vvec_2 \|_{\Dmat}
\end{align*}
where the last inequality follows from Lemma \ref{lem_sbound}.
%The Bellman operator $\Tl$, therefore, is a contraction. 
 By the Banach Fixed Point theorem, because
 the Bellman operator is a contraction under $\Dmat$,
 it has a unique fixed point.
 \end{proof}
  
 \begin{theorem}\label{theorem_projsolution}
If $\Dmat$ satisfies $\sbound{\Dmat} < 1$, 
%such as is the case for $\Dmat = \Dpi$ and $\Dmat = \Mmat$,
then there exists
$\vvec \in \Fsubspace$ such that $\Proj_\Dmat \Tl \vvec = \vvec$,
and 
the error to the true value function
is bounded as
 \begin{align}
 \| \vvec - \vvec^* \|_{\Dmat} \le (1-\sbound{\Dmat})^{-1}  \| \Proj_\Dmat \vvec^* - \vvec^* \|_{\Dmat}
 .
 \end{align}
 For constant discount $\gamma_c \in [0,1)$ and constant trace parameter 
 $\lambda_c \in [0,1]$ with $\Dmat = \Dpi$, this bound reduces to the original bound
 \citep[Lemma 6]{tsitsiklis1997ananalysis}:
% $\sbound = \frac{\gamma_c (1-\lambda_c)}{1-\gamma_c \lambda_c}$.
 $$(1-\sbound{\Dmat})^{-1} \le \frac{1-\gamma_c \lambda_c}{1-\gamma_c}.$$
 \end{theorem}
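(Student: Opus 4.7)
The plan is to prove the three claims in sequence using ingredients already developed in the paper. For existence of a fixed point, I would apply Banach's fixed point theorem to $\Proj_\Dmat \Tl$ restricted to $\Fsubspace$. The contraction $\| \Tl \vvec_1 - \Tl \vvec_2 \|_\Dmat \le \sbound{\Dmat} \| \vvec_1 - \vvec_2 \|_\Dmat$ is supplied by Lemma \ref{theorem_bellman} under the hypothesis $\sbound{\Dmat} < 1$, and the weighted projection $\Proj_\Dmat$ onto the closed subspace $\Fsubspace$ is a non-expansion in $\| \cdot \|_\Dmat$ by the standard orthogonal-projection argument, with Assumption A4 ensuring that $\| \cdot \|_\Dmat$ is a genuine norm on $\Fsubspace$ rather than only a seminorm. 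Composing yields a strict contraction on $\Fsubspace$ with factor $\sbound{\Dmat}$, and Banach supplies the unique $\vvec \in \Fsubspace$ with $\Proj_\Dmat \Tl \vvec = \vvec$.

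For the error bound, I would split at the projection of $\vvec^*$ via the triangle inequality,
\begin{align*}
\| \vvec - \vvec^* \|_\Dmat \le \| \vvec - \Proj_\Dmat \vvec^* \|_\Dmat + \| \Proj_\Dmat \vvec^* - \vvec^* \|_\Dmat,
\end{align*}
and control the first term using $\vvec = \Proj_\Dmat \Tl \vvec$ and $\Proj_\Dmat \vvec^* = \Proj_\Dmat \Tl \vvec^*$ (the latter because $\vvec^*$ is the Bellman fixed point $\Tl \vvec^* = \vvec^*$). Combining the non-expansion of $\Proj_\Dmat$ with Lemma \ref{theorem_bellman} then gives $\| \vvec - \Proj_\Dmat \vvec^* \|_\Dmat \le \| \Tl \vvec - \Tl \vvec^* \|_\Dmat \le \sbound{\Dmat} \| \vvec - \vvec^* \|_\Dmat$, and rearranging produces $(1 - \sbound{\Dmat}) \| \vvec - \vvec^* \|_\Dmat \le \| \Proj_\Dmat \vvec^* - \vvec^* \|_\Dmat$, which is the advertised bound.

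For the specialization to constant $\gamma_c \in [0,1)$ and $\lambda_c \in [0,1]$ with $\Dmat = \Dpi$, I would substitute $\Pgam = \gamma_c \Ppi$ into the definition of $\Pl$ to write
\begin{align*}
\Pl = (1 - \lambda_c) \gamma_c \, \Ppi \sum_{k=0}^\infty (\lambda_c \gamma_c)^k \Ppi^k,
\end{align*}
apply the triangle inequality and submultiplicativity of the induced norm, and use the standard fact $\| \Ppi \|_\Dpi \le 1$ (a consequence of $\dpi \Ppi = \dpi$ together with Jensen's inequality, essentially the same calculation already performed in the $\Dmat = \Dpi$ branch of Lemma \ref{lem_sbound}). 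Summing the resulting geometric series gives $\sbound{\Dpi} \le (1 - \lambda_c)\gamma_c/(1 - \lambda_c \gamma_c)$, and a short algebraic simplification of $1 - \sbound{\Dpi}$ yields $(1 - \gamma_c)/(1 - \lambda_c \gamma_c)$, from which the stated reduction $(1 - \sbound{\Dpi})^{-1} \le (1 - \lambda_c \gamma_c)/(1 - \gamma_c)$ immediately follows.

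The main subtlety, rather than a genuine obstacle, is verifying that $\Proj_\Dmat$ is a non-expansion when $\Dmat$ may be only positive semidefinite (for example when $\ivec$ has zero entries so $\Dpi$ or $\Mmatpi$ have vanishing diagonal entries). This is precisely the role of Assumption A4: it forces any $\vvec \in \Fsubspace$ annihilated by the $\| \cdot \|_\Dmat$-seminorm to be the zero vector, so $\| \cdot \|_\Dmat$ restricted to $\Fsubspace$ is a genuine norm and the standard Hilbert-space non-expansion argument for orthogonal projection onto a closed subspace applies without modification.
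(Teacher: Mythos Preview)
Your proposal is correct and follows essentially the same route as the paper: existence via Lemma~\ref{theorem_bellman} (Banach on $\Proj_\Dmat \Tl$), the triangle-inequality split at $\Proj_\Dmat \vvec^*$ followed by the projection non-expansion and the contraction bound to absorb the first term, and for the constant-parameter specialization the geometric-series expansion of $\Pl$ combined with $\|\Ppi\|_{\Dmat} \le 1$. Your explicit discussion of Assumption~A4 to ensure $\|\cdot\|_\Dmat$ is a genuine norm on $\Fsubspace$ is a welcome clarification the paper's proof leaves implicit; the only minor deviation is that you state the specialization for $\Dmat = \Dpi$ while the paper writes it for generic $\Dmat$ satisfying $\|\Ppi\|_\Dmat \le 1$, but the argument is identical.
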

 \begin{proof} 
Let $\vvec$ be the
unique fixed point of $\Proj_\Dmat \Tl$,
which exists by Lemma \ref{theorem_bellman}.
 \begin{align*}
\| \vvec - \vvec^* \|_{\Dmat} &\le  \| \vvec - \Proj_\Dmat \vvec^* \|_{\Dmat} +  \| \Proj_\Dmat \vvec^* - \vvec^* \|_{\Dmat}\\
 &=  \| \Proj_\Dmat \Tl \vvec - \Proj_\Dmat \vvec^* \|_{\Dmat} +  \| \Proj_\Dmat \vvec^* - \vvec^* \|_{\Dmat}\\
 &\le  \| \Tl \vvec -  \vvec^* \|_{\Dmat} +  \| \Proj_\Dmat \vvec^* - \vvec^* \|_{\Dmat}\\
 &=  \| \Tl (\vvec -  \vvec^*) \|_{\Dmat} +  \| \Proj_\Dmat \vvec^* - \vvec^* \|_{\Dmat}\\
 &=  \| \Pl (\vvec -  \vvec^*) \|_{\Dmat} +  \| \Proj_\Dmat \vvec^* - \vvec^* \|_{\Dmat}\\
 &\le  \|\Pl\|_{\Dmat} \| \vvec -  \vvec^* \|_{\Dmat} +  \| \Proj_\Dmat \vvec^* - \vvec^* \|_{\Dmat}\\
 &=  \sbound{\Dmat} \| \vvec -  \vvec^* \|_{\Dmat} +  \| \Proj_\Dmat \vvec^* - \vvec^* \|_{\Dmat}
 \end{align*}
 where the second inequality is due to $\| \Proj_\Dmat \Tl \vvec \|_{\Dmat} \le \| \Tl \vvec \|_{\Dmat}$,
  the second equality due to $\Tl \vvec^* = \vvec^*$
  and the third equality due to $\Tl \vvec - \Tl\vvec^* = \Pl(\vvec - \vvec^*)$
  because the $\rpi$ cancels.
 By rearranging terms, we get
  \begin{align*}
(1-\sbound{\Dmat}) \| \vvec - \vvec^* \|_{\Dmat} &\le \| \Proj_\Dmat \vvec^* - \vvec^* \|_{\Dmat}
 \end{align*}
 and since $\sbound{\Dmat} < 1$, we get the final result.
 
For constant %matrices $\Gammamat$ and $\Lambdamat$ with entries 
 $\gamma_c < 1$ and $\lambda_c$, we know that $\Pgam = \gamma\Ppi$. Further, if $\Dmat = \Dpi$, we know $\|\Ppi^{i+1} \|_{\Dmat} = 1$. Therefore,
   \begin{align*}
\sbound{\Dmat} &= \|\Pl\|_{\Dmat} \\
%&=  \| \Dmat^{1/2} \sum_{i=0}^\infty (\Pgamlam )^i \Pgam \hada (\eye -\Lambdamat) \Dmat^{1/2} \|_{2}\\
&=  \| \Dmat^{1/2} \left(\sum_{i=0}^\infty \gamma_c^i \lambda_c^i \Ppi^i \right) \gamma_c (1-\lambda_c) \Ppi \Dmat^{-1/2} \|_{2}\\
%&=  \gamma_c (1-\lambda_c) \left\|  \sum_{i=0}^\infty \gamma_c^i \lambda_c^i \Dmat^{1/2} \Ppi^i \Ppi \Dmat^{1/2} \right\|_{2}\\
&\le  \gamma_c (1-\lambda_c)   \sum_{i=0}^\infty \gamma_c^i \lambda_c^i  \|\Dmat^{1/2} \Ppi^{i+1} \Dmat^{-1/2} \|_{2}\\
&=  \gamma_c (1-\lambda_c) \sum_{i=0}^\infty \gamma_c^i \lambda_c^i  \|\Ppi^{i+1} \|_{\Dmat}\\
&\le  \gamma_c (1-\lambda_c)  \sum_{i=0}^\infty \gamma_c^i \lambda_c^i \\
&=  \frac{\gamma_c (1-\lambda_c)}{1-\gamma_c\lambda_c}
 \end{align*}
 \palignbox
% where $\|\Ppi^{i+1} \|_{\Dmat} \le 1$ using the same Jensen's inequality argument as above.
 \end{proof}
 
% \begin{theorem}\label{theorem_projsolution}
%If $\Dmat$ satisfies $\sbound{\Dmat} < 1$, 
%%such as is the case for $\Dmat = \Dpi$ and $\Dmat = \Mmat$,
%then there exists
%$\vvec \in \Fsubspace$ such that $\Proj_\Dmat \Tl \vvec = \vvec$,
%and 
%the error to the true value function
%is bounded as
% %
% \begin{equation}
% \| \vvec - \vvec^* \|_{\Dmat} \le (1-\sbound{\Dmat})^{-1}  \| \Proj_\Dmat \vvec^* - \vvec^* \|_{\Dmat}
% .
% \end{equation}
% %
% For constant discount $\gamma_c \in [0,1)$ and constant trace parameter 
% $\lambda_c \in [0,1]$, this bound reduces to the original bound
% \citep[Lemma 6]{tsitsiklis1997ananalysis}:
%% $\sbound = \frac{\gamma_c (1-\lambda_c)}{1-\gamma_c \lambda_c}$.
% \begin{equation*}
% (1-\sbound{\Dmat})^{-1} \le \frac{1-\gamma_c \lambda_c}{1-\gamma_c}
% .
% \end{equation*}
% \end{theorem}
% %

 We provide generalizations to transition-based trace parameters in the appendix,
for the emphasis weighting, and also discuss issues with generalizing to state-based termination for a standard weighting with $\dpi$. 
We show that for any transition-based discounting function $\lambda: \States \times \Actions \times \States \rightarrow [0,1]$, the above contraction
results hold under emphasis weighting. 
We then provide a general form for an upper bound on $\| \Pl \|_{\Dpi}$ for transition-based discounting, based on the contraction properties of two matrices within $\Pl$. 
We further provide an example where the Bellman operator is not a contraction even under the simpler generalization
to state-based discounting,
and discuss the requirements for the transition-based generalizations to ensure a contraction with weighting $\dpi$. 
This further motivates the emphasis weighting as a more flexible scheme
for convergence under general setting---both off-policy and transition-based generalization.

\subsection{Properties of TD algorithms}

Using this characterization of $\Pl$, we can re-examine 
previous results for temporal difference algorithms
that either used state-based or constant discounts. 

\paragraph{Convergence of Emphatic TD for RL tasks.}
We can extend previous convergence results for ETD, for learning value functions and action-value functions,
for the RL task formalism. 
%To be a practical formalism, a method for learning value functions and policies for RL tasks is required.
For policy evaluation, 
ETD and ELSTD, the least-squares version of ETD that uses the above defined $\Amat$ and $\bvec$ with $\Dmat = \Mmatpi$,
have both been shown to converge
with probability one \citep{yu2015onconvergence}. As an important component of this proof is convergence in expectation,
which relies on $\Amat$ being positive definite. 
In particular, for appropriate step-sizes $\alpha_t$ (see \citep{yu2015onconvergence}),
if $\Amat$ is positive definite,  
the iterative update is convergent
%
%\begin{align*}
$\wvec_{t+1} = \wvec_t + \alpha_t (\bvec -\Amat \wvec_t)$.
%For transition-based discounts,
%by Lemma \ref{theorem_bellman}, we already know a unique fixed point exists for a subspace
%of $\Fallv$, where for the linear system we examine the subspace of solutions 
%for linear function approximation. 
For the generalization to transition-based discounting, convergence in expectation extends for the emphatic algorithms.
We provide these details in the appendix for completeness, 
%including algorithms for both ELSTD and ELSTDQ, which learns action-value functions (see the pseudocode in Appendix \ref{app_algorithms}),
with theorem statement and proof in Appendix \ref{app_proof}
and pseudocode in Appendix \ref{app_algorithms}.

\newcommand{\Pbeta}{{\mathbf{P}^{\lambda,\beta}_\pi}}

%TODO: finish comparison here
%\paragraph{ETD($\beta,\lambda$)}
%Recently, \citet{hallak2015generalized} 
%This result is introduced for a constant $\gamma_c$, and so only applies to 
%the continuing setting. 
%Following a similar approach as in Lemma 1 in the appendix
%showing the $ \| \Pl  \vvec \|_M < 1$,
%we can show that the $M_\beta = (\eye - \Pbeta)^\inv (\dmu \circ \ivec)$
%similarly satisfies $ \| \Pl  \vvec \|_{M_\beta} < 1$
%for $\beta(s) > \gamma(s)$,
%generalizing \cite[Theorem 1]{hallak2015generalized}. 
%This is because $(\eye - \Pbeta)^\inv \Pbeta \Pbeta^\inv \Pl = \sum_{i=0}^\infty \Pbeta^i \Pl
%$
%%For example, for Theorem 1, with $\beta(s) = \gamma(s)$,
%%using a similar proof technique with Jensen's inequality
%%but with $\| \vvec \|_f^2 - \| \Pl  \vvec \|_f^2$,
%%we get that $\| \Pl  \vvec \|_f^2 \le (1-\kappa) \| \vvec \|_f^2 $,
%%and so  

\paragraph{Convergence rate of LSTD($\lambda$).}
\citet{tagorti2015ontherate} recently provided convergence rates for LSTD($\lambda$) for continuing tasks,
for some $\gamma_c < 1$. These results can be extended to the episodic setting
with the generic treatment of $\Pl$. For example, in \citep[Lemma 1]{tagorti2015ontherate},
which describes the sensitivity of LSTD, the proof extends
by replacing the matrix $(1-\lambda_c) \gamma_c \Ppi (\eye - \lambda_c \gamma_c \Ppi)^\inv$
(which they call $M$ in their proof)
with the generalization $\Pl$, resulting instead in the constant $\frac{1}{1-\sbound{\Dmat}}$ in the bound rather
than $\frac{1-\lambda_c \gamma_c}{1-\gamma_c}$.
Further, this generalizes convergence rate results to emphatic LSTD, since $\Mmat$
satisfies the required convergence properties, with rates dictated by $\sbound{\Mmat}$
rather than $\sbound{\Dmu}$ for standard LSTD.

\paragraph{Insights into $\sbound{\Dmat}$.}
Though the generalized form enables unified episodic and continuing results,
the resulting bound parameter $\sbound{\Dmat}$ is more difficult to interpret
than for constant $\gamma_c, \lambda_c$. With $\lambda_c$ increasing
to one, the constant $\frac{1-\gamma_c \lambda_c}{1-\gamma_c}$
in the upper bound decreased to one. For $\gamma_c$ decreasing to zero, 
the bound also decreases to one. These trends are intuitive, as the problem should
be simpler when $\gamma_c$ is small, and bias should be less when $\lambda_c$ is close to one. 
More generally, however, the discount can be small or large for different transitions,
making it more difficult to intuit the trend. 

To gain some intuition for $\sbound{\Dmat}$, consider %in the heatmap in Figure \ref{fig_heatmap},
a random policy in the taxi domain, with $\sbound{\Dmat}$ summarized in Table \ref{table_sd}.
%A fixed $\lambda_c$ parameter is chosen, 
As $\lambda_c$ goes to one,
$\sbound{\Dmat}$ goes to zero and so $(1-\sbound{\Dmat})^\inv$ goes to one. 
%We vary the percentage of terminations that are included in the RL task, as well as the magnitude of $\lambda_c$.
%For constant discount, soft-termination and hard termination, we find the following.
%1) For the taxi-domain, 
Some outcomes of note are that
1) hard or soft termination for the pickup results in the exact same $\sbound{\Dmat}$;
2) for a constant gamma of $\gamma_c = 0.99$, the episodic discount had a slightly smaller $\sbound{\Dmat}$; and
%(e.g., 0.908 versus 0.902 for $\lambda_c = 0.9$ and 0.9889 versus 0.9881 for $\lambda_c = 0.1$);
3) increasing $\lambda_c$ has a much stronger effect than including more terminations.
%For example, for $\lambda_c = 0, 0.5, 0.9, 0.99, 0.999$, we get $\sbound{\Dmat}  = 0.989, 0.979, 0.903, 0.483, 0.0857$.
Whereas, when we added random terminations, so that from 1\% and 10\% of the states, termination occurred on at least one path  within 5 steps or even more aggressively on every path within 5 steps, the values of $\sbound{\Dmat}$ were similar.  
%For example, for the at least one path setting, for $\lambda_c = 0, 0.5, 0.9, 0.99, 0.999$, we get $\sbound{\Dmat}  = 0.989, 0.979, 0.903, 0.483, 0.0857$.

\begin{table}[h!]
{
\centering
\setlength\tabcolsep{4pt}
\begin{sc}
%\begin{tabular}{|c|c|{>{\centering\arraybackslash}p{20mm}}|}
\begin{tabular}{|c|c|c|c|c|c|}
%\hline & & & constant $\gamma_c$ &
\hline
{\scriptsize $\lambda_c$} & {\scriptsize 0.0} & {\scriptsize 0.5} & {\scriptsize 0.9} & {\scriptsize 0.99} & {\scriptsize 0.999}\\
\hline
  {\scriptsize Episodic taxi}  & 0.989 & 0.979 & 0.903 & 0.483 & 0.086\\
  {\scriptsize $\gamma_c = 0.99$} & 0.990 & 0.980 & 0.908 & 0.497 & 0.090\\
  \hline
  {\scriptsize 1\% single path} & 0.989 & 0.978 & 0.898 & 0.467 & 0.086\\
  {\scriptsize 10\% single path} & 0.987 & 0.975 & 0.887 & 0.439 & 0.086\\
  \hline
  {\scriptsize 1\% all paths} & 0.978  & 0.956 & 0.813 & 0.304 & 0.042\\
  {\scriptsize 10\% all paths} & 0.898 & 0.815 & 0.468 & 0.081 & 0.009\\
\hline
\end{tabular}
\end{sc}
}
\caption{ The $\sbound{\Dmat}$ values for increasing $\lambda_c$, with discount settings described in the text. }\label{table_sd}
\end{table}

\section{Discussion and conclusion}

%This paper introduced transition-based discounting for Markov decision processes.
%We demonstrated utility for unifying episodic and continuing problems and for learning policies.
%We highlighted that learning in this generalized setting is feasible, with simple extensions on several perviously
%developed emphasis algorithms and with new action-value learning algorithms introduced for control.
%
%An important next step is to empirically demonstrate parallel off-policy learning
%with many transition-based options. There is an opportunity to explore generating
%many discounts, and investigate their utility for predictive representations, option models
%and for learning with macro-actions. We now have the ability to use
%constant discounts for a prediction horizon,% for the predictive questions.
%state-based discounts to indicate the occurrence of an event
%and
%transition-based discounts to indicate a change.
%Together, these three concepts can be combined to
%produce a powerful array of models. 

The goal of this paper is to provide intuition and examples of
how to use the {\em RL task formalism}. Consequently, to avoid jarring the explanation, technical contributions were not emphasized, and in some cases included only in the appendix.
For this reason, we would like to highlight and summarize the technical
contributions, which include
%Throughout this story, there are several 
%technical contributions which we would like to summarize here.
1) the introduction of the RL task formalism, and of transition-based discounts;
2) an explicit characterization of the relationship between state-based and transition-based discounting; and
3) generalized approximation bounds, applying
to both episodic and continuing tasks;
and 4) insights into---and issues with---extending contraction results for both state-based and transition-based discounting. 
%with an explicit characterization of the stationary distribution when transforming from transition-based to state-based. 
%4) a mismatch characterization between a policy-independent norm
%and the emphasis weighted norm, providing a key part for a policy improvement theorem when learning
%action-value methods with emphasis algorithms. 
Through intuition from simple examples 
and fundamental theoretical extensions, this work
provides a relatively complete characterization of the RL task formalism, 
as a foundation for use in practice and theory. 

\section*{Acknowledgements}
Thanks to Hado van Hasselt for helpful discussions about transition-based discounting,
and probabilistic discounts. 
{%\small
\bibliographystyle{plainnat}
\bibliography{paper.bib}
}

\newpage
\appendix

\section{More general formulation with probabilistic discounts}\label{app_probabilistic}

In the introduction of transition-based discounting, we
could have instead assumed that we had a more general
probability model: $\Pfcn(r, \gamma | s, a, s')$.
Now, both the reward and discount are not just functions
of states and action, but also are stochastic. This generalization in fact,
does not much alter the treatment in this paper.
This is because, when taking the expectations for value function,
the Bellman operator and the $\Amat$ matrix, we are left again
with $\gamma(s,a,s')$. To see why,
\begin{align*}
\vpi(s)  
&=\sum_{a,s'}\pi(s,a) \Pfcn(s,a,s') \text{E}[ r + \gamma \vpi(s') | s, a, s'] \\
%&= \sum_{a,s'}\pi(s,a) \Pfcn(s,a,s')  [  \int_{\gamma, r} P(r, \gamma | s, a, s') r 
%+ \int_{\gamma, r} P(r, \gamma | s, a, s') \gamma \vpi(s')   ]\\
&= \sum_{a,s'}\pi(s,a) \Pfcn(s,a,s') \text{E}[r | s, a, s'] \\
&\ \ \ \ + \sum_{a,s'}\pi(s,a) \Pfcn(s,a,s') \text{E}[\gamma | s, a, s'] \vpi(s')\\
&= \rpi(s) 
+ \sum_{s'} \Pgam(s,s')\vpi(s')  
\end{align*}
for $\gamma(s,a,s') = \text{E}[\gamma | s, a, s']$.
 
% \section{Learning option models}
%We can learn options models naturally for these transition-based options,
%using ETD.
%Assume a $\pi$ and interest $\ivec$ are given. 
%A common model required for planning with options is $p_{sx}^\pi$, which is
%the probability of terminating in state $x$
%when taking macro-action $\pi$ from $s$,
%% with a horizon discount of $\gamma_c$;
%%if $\gamma_c = 1$, then it is exactly the probability, undercounted,
%assuming that termination does eventually occur (i.e., a terminating state is reached)
%%
%\begin{align*}
%%p_{sx}^\pi =  \sum_{s' \in \States} \Ppi(s,s') [ (1-\Gammamat(s,s')) 1(s' = x) +  \Gammamat(s,s') p_{s'x}^\pi ]
%p_{sx}^\pi =  \sum_{a \in \Actions} &\sum_{s' \in \States} \pi(s,a) \Pfcn(s,a,s') 
%[ (1-\gamma(s,a,s')) 1(s' = x) +  \gamma(s,a,s') p_{s'x}^\pi ]
%.
%\end{align*}
%Correspondingly, the expected reward under an option, from a state, is also a typically
%desired option model.
%%These models can be learned using the ETD algorithm,
%% for GVFs, in this case
%%the ETD algorithm specified above. 
%The expected reward of the policy $\pi$ exactly corresponds to the value function.
%%For a given $\pi$ and interest $\ivec$, 
%A different transition-based GVF for each $x \in \States$ is learned, 
%where the reward for the GVF is set to 
%$r(s,a,s') = (1-\gamma(s,a,s')) 1(s' = x)$. 

\section{Relationship between state-based and transition-based discounting}

% TODO: name for transition-based MDP?
In this section, 
%Transition-Based discounting has several advantages over state-based
%discounting, which we describe at the end of this section. 
%Ignoring practicality, however, 
we show that for any MDP with
transition-based discounting, we can construct an equivalent MDP
with state-based discounting. The MDPs are equivalent in the sense
that learned policies and value functions learned in either MDP
would have equal values when evaluated on the states
in the original transition-based MDP. This equality ignores practicality
of learning in the larger induced state-based MDP, and at
the end of this section, we discuss advantages of the more compact
transition-based MDP.

\subsection{Equivalence result}\label{sec_equivalence}

The equivalence is obtained by introducing \fake\ states
for each transition. The key is then to prove that
%
%In this section, we show that a state-based discount can
%represent any transition-based discount, with the addition
%of extra transition states. The result indicates that the main
%advantage of the more general transition-based
%discounting is a compact representation, and reduced overhead
%in maintaining these additional states, which can actually particularly compound
%under learning many value functions off-policy. 
%
%To obtain this result, we first prove that 
the stationary distribution
for the state-based MDP, with additional \fake\ states,
provides the same solution even with function approximation.
For each triplet $s,a,s'$, add a new \fake\ state $\fakes_{sas'}$,
with set $\fakeset$ comprised of these additional states. Each transition now goes through a \fake\ state, $\fakes_{sas'}$, and allows the discount in the \fake\ state to be set to $\gamma(s,a,s')$. 
The induced state-based MDP has state set $\fakeS = \States \cup \fakeset$ with $| \fakeS | = |\Actions|\nstates^2 + \nstates$.
We define the other models in the proof in Appendix \ref{app_equivalence}. 

The choice of action in the \fake\ states is irrelevant. To extend the policy $\pi$,
%from the original transition-based MDP to the induced state-based MDP,
we arbitrarily choose that the policy uniformly selects actions when in the \fake\ states
and define $\fakepi(s,a) = \pi(s,a)$ for $s \in \States$ and $\fakepi(s,a) = 1/|\Actions|$ otherwise. 
%
%\begin{align*}
%\fakepi(s,a) = \left\{ \begin{array}{ll}
%        \pi(s,a) & \mbox{$s \in \States$}\\
%         1/|\Actions| & \mbox{otherwise}
%        \end{array} \right.
%\end{align*}
%
%%$\fakepi(i,a) = 1.0$ for $i \in \fakeset$. 
%It is not immediately obvious that learning in the induced MDP
%is equivalent, even ignoring practicality. This is particularly
%unclear since the stationary distribution is different between the settings.
%In the following Theorem, we prove the representability
%result that states that estimated value functions learned in
%the transition-based MDP are equivalent to the induced state-based MDP.
For linear function approximation, %to maintain this result, 
we also
need to 
assume $\xvec(f_{sas'}) = \xvec(s')$ for $f_{sas'} \in \fakeset$.
%The true values from these policies in state $s \in \States$ in 
%either the original transition-based MDP
%or this corresponding state-based MDP are equal. 
%However, under function approximation where an approximate value
%function is learned, these are not as obviously equal.
%As mentioned earlier, to avoid wasting function approximation resources,
%one can set $\xvec(f) = \zerovec$ for the \fake\ states $f \in \fakeset$,
%so that approximated values in the \fake\ states are always zero. 
%However, when weighing the importance of estimation in
%the real states, this is typically done with the stationary distribution,
%which is different in the state-based MDP
%because some time is spent in the \fake\ states. 
%In the below lemma, we show that the relative proportion of time spent in $s \in \States$ 
%according to the stationary distribution in the state-based MDP
%remains the same as in the original transition-based MDP. 
%This result ensures that estimated value functions learned in
%the transition-based MDP are equivalent to the corresponding state-based MDP,
%assuming $\xvec(f_{sas'}) = \xvec(s')$ for $f_{sas'} \in \fakeset$.

\begin{theorem}\label{theorem_equivalence}
For a given transition-based MDP 
$(\Pfcn, \Rfcn, \States, \Actions, \gamma)$ and policy $\pi$,
assume that the stationary distribution 
$\dpi$ exists. 
%under transition-based discounting $\Gammamat_t \in \RR^{\nstates \times \nstates}$.
Define state-based MDP $(\fakeP, \fakeR, \fakeS, \Actions, \fakegam)$
with extended $\fakepi$, all as above.
Then the stationary distribution $\fakedpi$ for $\fakepi$ exists and satisfies
\begin{align}
\tfrac{\fakedpi(s)}{\sum_{i \in \States} \fakedpi(i)} = \dpi(s) \label{eq_stationary}
.
\end{align}
$\forall s \in \States$, $\fakevpi(s) = \vpi(s)$ and
 $\fakepi(s,a) = \pi(s,a)$ for all $s \in \States, a \in \Actions$ with
%
%\vspace{-0.2cm}
%\begin{align*}
$\pi = \argmin_{\pi} \sum_{s \in \States} \dpi(s) \vpi(s); \ \ \fakepi   = \argmin_{\pi} \sum_{i \in \fakeS} \fakedpi(s) \fakevpi(s)$
%\vspace{-0.5cm}
%\end{align*}
\end{theorem}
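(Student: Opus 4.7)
The plan is to explicitly construct the state-based MDP so that each transition in the original MDP corresponds to exactly two steps in the new one, and then verify the three claims in sequence. Concretely, I would set $\fakeP(s, a, \fakes_{sas'}) = \Pfcn(s,a,s')$ for $s \in \States$, $\fakeP(\fakes_{sas'}, \cdot, s') = 1$ for every $\fakes_{sas'} \in \fakeset$ (independent of the action), $\fakeR(s, a, \fakes_{sas'}) = \Rfcn(s,a,s')$ and $\fakeR(\fakes_{sas'}, \cdot, s') = 0$, and finally $\fakegam(\fakes_{sas'}) = \gamma(s,a,s')$ and $\fakegam(s) = 1$ for $s \in \States$. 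Since transitions leaving fake states are deterministic and reward-free, any extension of $\pi$ to $\fakeset$ yields the same dynamics, justifying the uniform choice made in the setup.

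For the stationary distribution, I would guess the natural ansatz $\fakedpi(s) = \tfrac{1}{2}\dpi(s)$ for $s \in \States$ and $\fakedpi(\fakes_{sas'}) = \tfrac{1}{2}\dpi(s)\pi(s,a)\Pfcn(s,a,s')$ for $\fakes_{sas'} \in \fakeset$, and verify invariance directly. The flow into a fake state $\fakes_{sas'}$ comes only from the real state $s$ under action $a$ and evaluates to $\fakedpi(s)\pi(s,a)\Pfcn(s,a,s') = \fakedpi(\fakes_{sas'})$; the flow into $s' \in \States$ equals $\sum_{s,a} \fakedpi(\fakes_{sas'}) = \tfrac{1}{2}\sum_{s,a}\dpi(s)\pi(s,a)\Pfcn(s,a,s') = \tfrac{1}{2}\dpi(s')$ by stationarity of $\dpi$ under $\Ppi$. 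Totaling mass gives $\sum_{s \in \States} \fakedpi(s) = \tfrac{1}{2}$, so (\ref{eq_stationary}) follows immediately.

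For value equivalence, I would unroll the state-based Bellman equation over one ``doubled'' step. Using the placements above, $\fakevpi(\fakes_{sas'}) = \fakegam(s')\fakevpi(s') = \fakevpi(s')$ since $\fakegam(s') = 1$, and
\begin{align*}
\fakevpi(s) &= \sum_{a,s'} \pi(s,a)\Pfcn(s,a,s')\bigl[\Rfcn(s,a,s') + \fakegam(\fakes_{sas'})\fakevpi(\fakes_{sas'})\bigr]\\
&= \sum_{a,s'} \pi(s,a)\Pfcn(s,a,s')\bigl[\Rfcn(s,a,s') + \gamma(s,a,s')\fakevpi(s')\bigr],
\end{align*}
which is exactly the original Bellman equation satisfied by $\vpi$. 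Since both $\vpi$ and the restriction of $\fakevpi$ to $\States$ are the unique fixed points of the same contraction, they coincide on $\States$.

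The main obstacle is the policy optimization claim, since the weighting $\fakedpi$, the values $\fakevpi$, and the support $\fakeS$ all depend on $\pi$, and $\fakeS$ includes the fake states. I would handle it by first observing that the action taken at a fake state is irrelevant by construction, so optimizing over $\fakepi$ reduces to optimizing over its restriction to $\States$. Then I would split $\sum_{i \in \fakeS} \fakedpi(i)\fakevpi(i)$ into its real-state contribution, which is $\tfrac{1}{2}\sum_{s \in \States}\dpi(s)\vpi(s)$, and its fake-state contribution $\tfrac{1}{2}\sum_{s,a,s'}\dpi(s)\pi(s,a)\Pfcn(s,a,s')\gamma(s,a,s')\vpi(s') = \tfrac{1}{2}\sum_{s}\dpi(s)[\vpi(s) - \rpi(s)]$ by the original Bellman equation. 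The two terms combine to $\sum_s \dpi(s)\vpi(s) - \tfrac{1}{2}\sum_s \dpi(s)\rpi(s)$, and since $\vpi$ and $\rpi$ agree on the pointwise ordering needed for the argmin (or by appealing to the pointwise identity $\fakevpi(s) = \vpi(s)$ holding for every $\pi$, which lets one argue optimality state by state), the argmin policies coincide on $\States$, completing the proof.
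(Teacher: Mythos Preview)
Your construction, stationary-distribution ansatz, and value-equivalence argument are all correct and essentially identical to the paper's proof (the paper leaves the normalizer $c$ implicit; your explicit $c=2$ is fine).

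The gap is in the policy-optimization step. You correctly derived $\fakevpi(\fakes_{sas'}) = \fakevpi(s') = \vpi(s')$, but then in the fake-state contribution you wrote
\[
\tfrac{1}{2}\sum_{s,a,s'}\dpi(s)\pi(s,a)\Pfcn(s,a,s')\,\gamma(s,a,s')\,\vpi(s'),
\]
inserting a spurious $\gamma(s,a,s')$. The discount $\fakegam(\fakes_{sas'})=\gamma(s,a,s')$ is applied when transitioning \emph{into} the fake state (in the computation of $\fakevpi(s)$), not when evaluating $\fakevpi(\fakes_{sas'})$ itself. The correct fake-state contribution is
\[
\sum_{\fakes_{sas'}\in\fakeset}\fakedpi(\fakes_{sas'})\fakevpi(\fakes_{sas'})
=\tfrac{1}{2}\sum_{s,a,s'}\dpi(s)\pi(s,a)\Pfcn(s,a,s')\vpi(s')
=\tfrac{1}{2}\sum_{s'}\dpi(s')\vpi(s'),
\]
where the last equality uses $\dpi^\top\Ppi=\dpi^\top$. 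Adding the real-state contribution $\tfrac{1}{2}\sum_s\dpi(s)\vpi(s)$ gives exactly $\sum_s\dpi(s)\vpi(s)$, so the two objectives are equal and the argmins coincide immediately. This is what the paper does. Your detour through $\vpi-\rpi$ leaves you with an extra $-\tfrac{1}{2}\sum_s\dpi(s)\rpi(s)$ term that genuinely depends on $\pi$, and the hand-wave about ``pointwise ordering'' or ``optimality state by state'' does not close the argument.
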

%\begin{proofsketch}
%The proof defines, for normalizer $c > 0$
%%
%\begin{align*}
%\fakedpi(i) \defeq \frac{1}{c} \left\{ \begin{array}{ll}
%        \dpi(i) & \mbox{$i \in \States$}\\
%         \dpi(s) \Ppi(s,s') & \mbox{$i = \fakes_{ss'}$}
%        \end{array} \right.
%\end{align*}
%%
%and shows that it is the stationary distribution for $\Ppi$. 
%The fact that $s \in \States$, $\fakevpi(s) = \vpi(s)$
%can be shown algebraically.
%Together with the definition of $\fakedpi$,
%we show that the optimal policies are equivalent
%in the transition-based and induced state-based MDPs. 
%\end{proofsketch}

\subsection{Advantages of transition-based discounting over state-based discounting}\label{sec_advantages}

Though the two have equal representational abilities,
there are several 
disadvantages of state-based discounting that compound
to make the more general transition-based discount strictly more desirable.
%Further, there does not appear to be any disadvantages of transition-based discounting. 
The disadvantages of using an induced state-based MDP, rather than
the original transition-based MDP, arises from
the addition of states and include the following. 
%Despite this theoretical representability equivalence, the induced MDP
%is clearly less compact and requires careful setting of features.

\textbf{Compactness.} In the worst-case, 
for a transition-based MDP with $\nstates$ states, 
the induced state-based MDP can have
$|\Actions|\nstates^2+n$ states. 

\textbf{Problem definition changes for different discounts.}
For the same underlying MDP, multiple learners with different discount
functions would have different induced state-based MDPs.
This complicates code and reduces opportunities for sharing
variables and computation. 

\textbf{Overhead.} Additional states must be stored, with additional algorithmic updates in those non-states, or cases to avoid these updates,
and the need to carefully set features for \fake\ states. This overhead is both computational 
as well as conceptual, as it complicates the code. 
%Either generically treat additional states as real states, adding
%unnecessary updates and overhead, or use case statements, which 
%could have been done in the episodic case to start.

\textbf{Stationary distribution.} This distribution superfluously includes \fake\ states
%for which we  do not need the stationary probabilities, 
and requires renormalization to obtain
the stationary distribution for the original transition-based MDP.

\textbf{Off-policy learning.} In off-policy learning, one goal is to learn
many value functions with different discounts \citep{white2015thesis}.
As mentioned above, these learners may have different induced state-based MDPs,
which complicates implementation and even theory. 
To theoretically characterize a set
of off-policy learners, it would be necessary to consider different induced state-based MDPs.
Further, sharing information,
such as the features, is again complicated by using induced state-based MDP
rather than a single transition-based MDP, with varying discount functions. 

\textbf{Specification of algorithms.}
Often algorithms are introduced either for the episodic case (e.g., true-online TD \citep{vanseijen2014true}) or
the continuing case (e.g., the lower-variance version of ETD \citep{hallak2015generalized}).
When kept separately, with explicit loops over episodes, the algorithm itself is different (e.g., Sarsa \citep[Figure 8.8]{sutton1998reinforcement}); or, if a state-based approach is used, fake states and fake transitions would have to be explicitly
added to make the update the same for continuing or episodic. 
For the generalized formulation, the only difference is the $\gamma_{t+1}$ that is passed to the
algorithm; the algorithm itself remains exactly the same in the two settings.  
As a minor example, for episodic problems, there is typically an explicit (error-prone) step to clear traces;
with generalized discounting, the traces are automatically cleared at the end of an episode by $\gamma_{t+1}$.

\textbf{Experimental design.}
When presenting results for episodic and continuing problem, often the former uses number of episodes and the later number of steps. In reality both simply consist of a trajectory of information, with the former having $\gamma_{t+1} = 0$ for some steps. A unified view with number of steps enables more consistent presentation of results across domains.
Related to this difference,
a common but rarely discussed decision when implementing episodic tasks is the cut-off for the maximum number of steps in an episode. If set too small, an algorithm that takes longer to reach the goal in the first few episodes, but then learns more quickly afterwards, could be unfairly penalized. Instead learning could be limited to some maximum number of steps to constrain learning time similarly for both continuing and episodic problems, where multiple episodes could occur within that maximum number of steps. 

\subsection{Proof of Theorem \ref{theorem_equivalence}}\label{app_equivalence}

This prove illustrates the representability relationship between
transition-based discounting and state-based discounting. 
This equivalence could be obtained more compactly if $\gamma(s,a,s')$ is not different for every $s'$; however,
the proof becomes much more involved. Since our main goal is to simply show a representability result, we opt for interpretability.
Note that, in addition,
the result in Theorem \ref{theorem_equivalence} fills a gap in the previous theory, which indicated
that state-based discounting could be used to represent episodic problems,
but did not explicitly demonstrate that the stationary distribution
would be equivalent (see \citep{yu2015onconvergence}).

Define
transition probabilities $\fakeP : \fakeS \times \fakeS \rightarrow [0,1]$
\begin{align*}
\fakeP(i,a,j) = \left\{ \begin{array}{ll}
%         \Pfcn(i,a,j) & \mbox{$i,j \in \States$}\\
        \Pfcn(i,a,s') & \mbox{$i \in \States$, $j = \fakes_{ias'}$}\\
        1 & \mbox{$j \in \States$, $i = \fakes_{saj}$}\\
%        1 & \mbox{$j \in \States$, $i = \fakes_{sj}$ s.t. $\Pfcn(s,a,j) > 0$} okay to define as above, since won't get to this transition if \Pfcn(s,a,j) = 0\\
         0 & \mbox{otherwise}
        \end{array} \right.
\end{align*}
rewards
\begin{align*}
\fakeR(i,a,j) = \left\{ \begin{array}{ll}
        \Rfcn(i,a,s') & \mbox{$i \in \States$,  $j = \fakes_{ias'}$}\\
         0 & \mbox{otherwise}
        \end{array} \right.
\end{align*}
and state-based discount function $\fakegam: \fakeS \rightarrow [0,1]$
\begin{align*}
\fakegam(i) = \left\{ \begin{array}{ll}
        \gamma(s,a,s') & \mbox{$i = \fakes_{sas'}$}\\
         1 & \mbox{otherwise}
        \end{array} \right.
\end{align*}
%
%One can easily check that the transition probabilities $\fakeP$ are valid.

\textbf{Theorem \ref{theorem_equivalence}}
For a given transition-based MDP 
$(\Pfcn, \Rfcn, \States, \Actions, \gamma)$ and policy $\pi$,
assume that the stationary distribution 
$\dpi$ exists. 
%under transition-based discounting $\Gammamat_t \in \RR^{\nstates \times \nstates}$.
Define state-based MDP $(\fakeP, \fakeR, \fakeS, \Actions, \fakegam)$
with extended $\fakepi$, all as above.
Then the stationary distribution $\fakedpi$ for $\fakepi$ exists and satisfies
\begin{align}
\frac{\fakedpi(s)}{\sum_{i \in \States} \fakedpi(i)} = \dpi(s) \label{eq_stationary}
\end{align}
and for all $s \in \States$, $\fakevpi(s) = \vpi(s)$.
\begin{proof}
Define matrix $\fakeppi \in \RR^{(\nstates + |\Actions| \nstates^2) \times (\nstates + |\Actions|\nstates^2)}$
where $\fakeppi(i,j) = \sum_{a \in \Actions} \fakepi(i,a) \fakeP(i,a,j)$, giving
\begin{align*}
\fakeppi(i,j) = \left\{ \begin{array}{ll}
       \pi(i,a)\Pr(i,a,s') & \mbox{$i \in \States, j = f_{ias'}$}\\
        1 & \mbox{$i = f_{saj}, a\in \Actions, j \in \States$}\\
          0 & \mbox{otherwise}
        \end{array} \right.
\end{align*}
%
%We will show that the following candidate stationary distribution $\fakedpi$
%satisfies the properties of a stationary distribution and satisfies \eqref{eq_stationary}. 
%
%Let $\tin(j) = s$ such that $\fakeppi(s,\fakes) > 0$, for which there can only be one,
%since \fake\ states are uniquely added for each transition. 
Define
\begin{align*}
\fakedpi(i) \defeq \frac{1}{c} \left\{ \begin{array}{ll}
        \dpi(i) & \mbox{$i \in \States$}\\
         \dpi(s) \pi(s,a)\Pr(s,a,s')  & \mbox{$i = \fakes_{sas'}$}
        \end{array} \right.
\end{align*}
where $c > 0$ is a normalizer to ensure that 
$\onevec^\top\fakedpi = 1$. 
Now we need to show that
$\fakedpi \fakeppi= \fakedpi$. 
For any $j \in \States$,
\begin{align*}
\fakedpi \fakeppi(:,j) &= \frac{1}{c} \Big( \sum_{s \in \States} \dpi(s) \fakeppi(s,j) 
+ \sum_{\fakes \in \fakeset} \fakedpi(\fakes) \fakeppi(\fakes,j)\Big)
\end{align*}
\textbf{Case 1:} $j \in \States$\\
For the first component, because $\fakeppi(s,j) = 0$ by definition of $\fakeP$, we get
\begin{align*}
\sum_{s \in \States} \dpi(s) \fakeppi(s,j) &= 0
\end{align*}
For the second component, because $\fakeppi(\fakes_{saj},j) = 1$, 
\begin{align*}
\sum_{\fakes_{sas'} \in \fakeset} & \fakedpi(\fakes_{sas'}) \fakeppi(\fakes_{sas'},j) \\
&= \sum_{\fakes_{saj} \in \fakeset} \fakedpi(\fakes_{saj}) \fakeppi(\fakes_{saj},j)\\
&= \sum_{\fakes_{saj} \in \fakeset} \fakedpi(\fakes_{saj}) \\
&= \sum_{s \in \States} \dpi(s) \sum_{a\in \Actions} \pi(s,a) \Pfcn(s,a,j) \\
&= \sum_{s \in \States} \dpi(s) \Ppi(s,j) \\
&=  \dpi(j) 
\end{align*}
where the last line follows from the definition of the stationary distribution.
Therefore, for $j \in \States$
\begin{align*}
\fakedpi \fakeppi(:,j) &= \frac{1}{c} \dpi(j) = \fakedpi(j)
\end{align*}
\textbf{Case 2:} $j = f_{sas'} \in \fakeset$\\
For the first component, because $\fakeppi(i,f_{sas'}) = 0$ for all $i \neq s$
and because $\fakeppi(s,f_{sas'}) = \pi(s,a) \Pfcn(s,a,s')$ by construction,
\begin{align*}
\sum_{i \in \States} \dpi(i) \fakeppi(i,f_{sas'}) &= \dpi(s) \fakeppi(s,f_{sas'})  \\
&= \dpi(s) \pi(s,a) \Pfcn(s,a,s')\\
&= c \ \ \fakedpi(f_{sas'})
.
\end{align*}
For the second component,
because $\fakeppi(f,j) = 0$ for all $f,j \in \fakeset$, we get
\begin{align*}
\sum_{f \in \fakeset} \fakedpi(f) \fakeppi(f,j) = 0
.
\end{align*}
Therefore, for $j = f_{ss'} \in \fakeset$, 
$\fakedpi \fakeppi(:,j) = \fakedpi(j)$.

Finally, clearly by normalizing the first component of $\fakedpi$
over $s \in \States$, we get the same proportion across states
as in $\dpi$, satisfying \eqref{eq_stationary}.

To see why $\fakevpi(s) = \vpi(s)$ for all $s \in \States$,
first notice that
\begin{align*}
\fakerpi(i) = \left\{ \begin{array}{ll}
        \rpi(i) & \mbox{$i \in \States$}\\
          0 & \mbox{otherwise}
        \end{array} \right.
\end{align*}
and for any $f_{sas'} \in \fakeset$
\begin{align*}
\fakevpi(f_{sas'}) &= 0 + \sum_{j \in \fakeS} \fakeppi(f_{sas'},j) \fakegam(j) \fakevpi(j)\\
&= \fakevpi(s')
.
\end{align*}
Now for any $s \in \States$, 
\begin{align*}
\fakevpi(s) &= \fakerpi(s) +\sum_{j \in \fakeS} \fakeppi(s,j) \fakegam(j) \fakevpi(j)\\
&= \rpi(s) +\sum_{f_{sas'} \in \fakeset} \fakeppi(s,f_{sas'}) \fakegam(f_{sas'}) \fakevpi(f_{sas'})\\
&= \rpi(s) +\sum_{s' \in \States} \sum_{a \in \Actions} \Pfcn(s,a,s')\gamma(s,a,s') \fakevpi(s')
\end{align*}
Therefore, because it satisfies the same fixed point equation,
$\fakevpi(s) = \vpi(s)$ for all $s \in \States$.

With this equivalence, it is clear that 
\begin{align*}
&\sum_{i \in \fakeS} \fakedpi(i) \fakevpi(i) \\
&= \frac{1}{c} \sum_{s \in \States} \dpi(s) \vpi(s) 
+ \frac{1}{c} \sum_{f_{ss'} \in \fakeset} \dpi(s) \Ppi(s,s') \vpi(s') \\
&= \frac{1}{c} \sum_{s \in \States} \dpi(s) \vpi(s) 
+ \frac{1}{c} \sum_{s\in \States} \sum_{s' \in \States} \dpi(s) \Ppi(s,s') \vpi(s') \\
&= \frac{1}{c} \sum_{s \in \States} \dpi(s) \vpi(s) 
+ \frac{1}{c}  \sum_{s' \in \States} \dpi(s') \vpi(s') \\
&= \frac{2}{c} \sum_{s \in \States} \dpi(s) \vpi(s) 
\end{align*}
Therefore, optimizing either results in the same
policy.
\end{proof}

%
%\begin{lemma}~\label{lem:rowsum}
%The maximum eigenvalue of a non-negative matrix is bounded by
%the maximal row sum.
%\end{lemma}
%\begin{proof}
%Let $m = \max(\Bmat \onevec)$ be the maximal
%row sum of a matrix $\Bmat \ge 0$.
%Suppose there exists an eigenvalue $\Bmat \mnormvec = \lambda \mnormvec$
%with $\lambda > m$. %Now, because $\Bmat \onevec = \onevec$,
%\begin{align*}
%\sum_j \Bmat(i,j) \mnormvec(j) \le \max(\mnormvec) \sum_j \Bmat(i,j) \le m\max(\mnormvec).
%\end{align*}
%But, because $\lambda > m$, clearly there is an entry
%in $\lambda \mnormvec > m\max(\mnormvec)$,
%which is a contradiction.
%\end{proof}
%
%\section{Further understanding transition-based and state-based discounting}

\section{Discounting and average reward for control}\label{sec_avereward}

The common wisdom is that discounting is useful for asking
predictive questions, but for control, the end goal is average reward.
One of the main reasons for this view is that it
has been previously shown that, for a constant discount,
optimizing the expected return is equivalent to optimizing average reward.
This can be easily seen by expanding the expected return 
weighting according to the stationary distribution for a policy,
given constant discount $\gamma_c < 1$, 
%i.e., $\Gammamat(s,s') = \gamma_c$ for all $s,s' \in \States$.
\begin{align}
\dpi \vvpi &= \dpi (\rpi + \Pgam \vvpi)\\
&= \dpi \rpi + \gamma_c \dpi \Ppi \vvpi \nonumber\\
&= \dpi \rpi + \gamma_c \dpi \vvpi \nonumber\\
\implies
\dpi \vvpi &= \frac{1}{1-\gamma_c}\dpi \rpi \label{eq_average}
.
\end{align}
Therefore, the constant $\gamma_c < 1$ simply scales
the average reward objective, so optimizing either provides the same policy.
This argument, however, does not extend to transition-based discounting, because 
$\gamma(s,a, s')$ can significantly change weighting in returns in a non-uniform way,
affecting the choice of the optimal policy. We demonstrate this in the case study for the taxi domain in Section \ref{sec_taxi}. 
%This does not, however, necessarily negate the intuition that
%control problems should be specified in terms 
%of average reward; it is simply a demonstration that the policies can be different. 
%It is difficult to make a sweeping statement about a preference for one
%or the other.
%% general transition-based
%%discounting that the correct problem formulation to obtain a policy is average reward.
%Therefore, we do not attempt to do so and leave open the possibility
%that there are settings where it is more desirable to specify optimality
%according to transition-based expected return rather than average reward.

\section{Algorithms}\label{app_algorithms}

\newcommand{\vold}{v_{\text{old}}}

We show how to write generalized pseudo-code for two algorithms:
true-online TD ($\lambda$) and ELSTDQ($\lambda$).
We choose these two algorithms because they generally demonstrate
how one would extend to transition-based $\gamma$,
and further previously had a few unclear points in their implementation.
For TO-TD, the pseudo-code has been given for episodic tasks \cite{vanseijen2014true}, rather than more generally,
and has treated $\vold$ carefully at the beginning of episodes, which is not necessary.
LSTDQ has typically only been written for a batch of data, without importance sampling;
we provide an ELSTDQ variant here with importance sampling,
where LSTDQ is a special case using $M = 1$.

\newcommand{\vhat}{\hat{v}}
\newcommand{\stepsize}{\alpha}

There are a few other implementation details that merit clarification.
We use the notation $\gamma_{t+1}$ for $\gamma(s_t,a_t,s_{t+1})$, and $\lambda_{t+1}$ for $\lambda(s_t, a_t, s_{t+1})$.
Further, unlike previous pseudo-code, we do not reinitialize $\vold$ specially at the start of an episode 
(i.e., when $\gamma_{t+1} = 0$). This is because the value of $\vold$ is not relevant for
the next step after $\gamma_{t+1} = 0$. The eligibility trace is zeroed, and so 
$\stepsize (\delta + \vhat - \vold) \evec - \stepsize (\vhat - \vold) \xvec
= \stepsize \delta\xvec$.
Finally, for both algorithms, we stage the updates to the traces. 
This is to avoid saving both $\gamma_t, \lambda_t$ and $\gamma_{t+1}, \lambda_{t+1}$ across timesteps.

\begin{algorithm}[t]
\caption{True-online TD($\lambda$)}
\label{alg_elstdq}
\begin{algorithmic}
  %\State  $\Amatinv \gets \eta \eye$
  \State  $\wvec \gets \zerovec, \ \ \ \evec \gets \zerovec, \ \ \ \ \vold \gets 0$
  \State Obtain initial $\xvec_0$
\While{agent interacting with environment, $t = 0, 1, \ldots$}
\State Obtain next feature vector $\xvec_{t+1}$, reward $r_{t+1}$ and discount $\gamma_{t+1}$
\State $\vhat = \wvec^\top \xvec_t$
\State $\vhat' = \wvec^\top \xvec_{t+1}$
\State $\delta \gets r_{t+1} + \gamma_{t+1} \vhat' - \vhat$ 
\State $\evec \gets \evec + \xvec_t$
\State $\wvec \gets \wvec + \stepsize (\delta + \vhat - \vold) \evec - \stepsize (\vhat - \vold) \xvec_t$
\State $ \vold \gets \vhat'$
\State $\evec \gets \gamma_{t+1} \lambda_{t+1} \evec - \stepsize \gamma_{t+1} \lambda_{t+1} (\evec^\top \xvec_{t+1}) \xvec_{t+1}$
\EndWhile\\
\Return $\wvec$
\end{algorithmic}
\end{algorithm}

\begin{algorithm}
\caption{ELSTDQ($\lambda$)}
\label{alg_elstdq}
\begin{algorithmic}
%\State // Input behavior policy $\mu$, target policy $\pi$
  %\State  $\Amatinv \gets \eta \eye$
  \State  $\Amat \gets \zerovec, \ \ \ \ \bvec \gets \zerovec, \ \ \ \evec \gets \zerovec$
  \State $F \gets 0, \ \ \  M \gets 0$
    \State Obtain initial action-value feature vector $\xvec_0$ (implicitly $\xvec(s_0,a_0)$) and action $a_0$
%\For{$(s_t,a_t,r_{t+1},s_{t+1},a_{t+1}) \in \mathcal{D}$}
\While{agent interacting with environment, $t = 0, 1, \ldots$}
\State Obtain next action-value feature vector $\xvec_{t+1}$, action $a_{t+1}$ reward $r_{t+1}$ and discount $\gamma_{t+1}$
%\State $\evec\gets \gamma(s,s') \lambda(s,s')\evec + M_t\xvec(s,a)$
%\State $\rho(s,a) = \frac{\pi(s,a)}{\mu(s,a)}, \ \ \ \ \  \rho(s',a') = \frac{\pi(s',a')}{\mu(s',a')}$
\State $ \rho_{t+1} \gets \frac{\pi(\xvec_{t+1},a_{t+1})}{\mu(\xvec_{t+1},a_{t+1})}$
%\State $F \gets \gamma(s,s') F + i(s)$
\State $F \gets  \gamma_{t+1} F + i(\xvec_t)$
\State $M \gets \lambda_{t+1} i(\xvec_t) + (1-\lambda_{t+1}) F$
\State $\evec\gets \evec + M\xvec_t$
%\State $\bar{\xvec'} \gets \sum_{b} \pi(s',b)  \xvec(s',b)$
%\State $ \dvec\gets \xvec(s,a) - \rho(s',a')\gamma(s,s') \xvec(s',a')$
%\State $\Amatinv \gets \Amatinv - \frac{(\Amatinv \evec) (\dvec^\top\Amatinv)}{1 + \dvec^\top \Amatinv \evec}$
\State $\Amat \gets \Amat + \evec \left( \xvec_t - \rho_{t+1}\gamma_{t+1} \xvec_{t+1}\right)^\top$
\State $ \bvec \gets \bvec + \evec r_{t+1}$
\State $\evec\gets \gamma_{t+1}\lambda_{t+1} \rho_{t+1}\evec$
\State $F \gets \rho_{t+1} F$
\EndWhile\\
\Return $\Amat^{-1} \bvec$ \ \ \ \ \ \ \ \ \ // The solution $\wvec$ to the linear system
\end{algorithmic}
\end{algorithm}

%
%\begin{algorithm}[H]
%\caption{ELSPI($\lambda$)}
%\label{alg_elspi}
%\begin{algorithmic}
%\State // Batch of samples $\mathcal{D}$, behavior policy $\mu$
%  %\State  $\Amatinv \gets \eta \eye$
%  \State  $\wvec \gets \zerovec$
%\Repeat
%\State $\pi' \gets \pi$ \ \ \ s.t. $\pi(s,a) = 1(a  = \argmax_{b \in \Actions} \xvec(s,b)^\top \wvec)$
%\State $\wvec \gets \text{ELSTDQ}(\lambda)(\mathcal{D}, \mu, \pi')$
%\Until{$\| \wvec - \wvec_{\text{previous}} \| < \epsilon$}\\
%\Return $\wvec$ 
%\end{algorithmic}
%\end{algorithm}

\section{Lemmas}

To bound the maximum eigenvalues of the discount-weighted
transition matrix, we first provide the following lemma.
This lemma is independently interesting,
in that it explicitly verifies the previous Assumption 2.1 \citep{yu2015onconvergence}. 
%We prove in Lemma \ref{lem_eigs} that Assumption A3 
%ensures the previously required assumption is satisfied, specifically 
%that $\eye - \Pgam$ be
%invertible \citep{sutton2016anemphatic}. 
%The statement as A3 is much more intuitive,
%and characterizes an if-and-only-if condition on $\Gammamat$, for
%both state and transition-based discounting, to satisfy the requirement that 
%$\eye - \Pgam$ be invertible.  

\begin{lemma}~\label{lem_eigs}
Under Assumption \asseigs, the maximum eigenvalue of $\Pgam$ is less than 1:
\begin{align*}
\maxlam(\Pgam) < 1
.
\end{align*}
\end{lemma}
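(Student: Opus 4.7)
The plan is to show the spectral radius bound via row-sum norms, using ergodicity to propagate the single strict inequality guaranteed by A3 out to every state.

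First I would observe that $\Pgam$ is entrywise dominated by $\Ppi$, since $\gamma(s,a,s') \in [0,1]$ and $\Pgam(s,s') = \sum_a \pi(s,a) P(s,a,s') \gamma(s,a,s') \le \sum_a \pi(s,a) P(s,a,s') = \Ppi(s,s')$. Consequently $\Pgam$ is a nonnegative substochastic matrix with $\Pgam \onevec \le \Ppi \onevec = \onevec$, which immediately gives $\|\Pgam\|_\infty \le 1$ and thus $\maxlam(\Pgam) \le 1$. The work is to upgrade this to a strict inequality.

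Next I would use A3 to locate a state with a deficient row sum. Pick $(s^*, a^*, s^{*\prime})$ with $\pi(s^*,a^*) P(s^*,a^*,s^{*\prime}) > 0$ and $\gamma(s^*,a^*,s^{*\prime}) < 1$. Then
\begin{align*}
(\Pgam \onevec)(s^*) &= \sum_{a,s'} \pi(s^*,a) P(s^*,a,s') \gamma(s^*,a,s') \\
&\le 1 - \pi(s^*,a^*) P(s^*,a^*,s^{*\prime})\bigl(1 - \gamma(s^*,a^*,s^{*\prime})\bigr) < 1.
\end{align*}
So at least the $s^*$-entry of $\Pgam \onevec$ is strictly below $1$.

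To propagate this everywhere, I would invoke A2: uniqueness of the invariant distribution for $\Ppi$ on a finite state space implies a single recurrent class, so (up to periodicity, which can be handled by averaging powers) every state reaches $s^*$ with positive probability in some finite number of steps. Iterating the inequality $\Pgam^{k+1}\onevec \le \Ppi \cdot (\Pgam^k \onevec)$ componentwise, any state $s$ that reaches $s^*$ in $k_s$ steps under $\Ppi$ will inherit a strict deficit at step $k_s + 1$: $(\Pgam^{k_s+1} \onevec)(s) < 1$. Taking $K$ at least the maximum such $k_s$ over the finite state space yields $\Pgam^K \onevec < \onevec$ entrywise, hence $\|\Pgam^K\|_\infty < 1$, and so $\maxlam(\Pgam) \le \|\Pgam^K\|_\infty^{1/K} < 1$.

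The main obstacle will be the precise handling of A2: ``unique invariant distribution'' on a finite chain guarantees a single recurrent class reaching $s^*$ but not necessarily aperiodicity, so I would either assume the standard ergodic reading of A2 or replace $\Pgam^K$ with a Cesàro-type average of powers to bypass periodicity. A cleaner alternative, which I would also outline, is a Perron--Frobenius argument: if $\maxlam(\Pgam) = 1$, there is a nonnegative eigenvector $v \neq \zerovec$ with $\Pgam v = v$; comparing with $\Ppi v \ge \Pgam v = v$ and using $\dpi \Ppi = \dpi$ forces $\dpi^\top (\Ppi - \Pgam) v = 0$, and then A3 combined with A2's reachability forces $v \equiv 0$, a contradiction.
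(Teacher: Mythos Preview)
Your proposal is correct and takes a genuinely different route from the paper. The paper argues via monotonicity of the Perron root under entrywise reduction: starting from the irreducible stochastic matrix $\Ppi$ with $\maxlam(\Ppi)=1$, it first shows that reducing a single positive entry by $\delta>0$ (keeping irreducibility) strictly drops the spectral radius below $1$, by invoking a strictly positive Perron eigenvector of the reduced matrix and a Rayleigh-quotient comparison; it then argues that further reductions (possibly to zero) cannot increase the spectral radius, and finally observes that $\Pgam$ is obtained from $\Ppi$ by such reductions. Your main argument instead works directly with the row-sum norm: a single deficient row at $s^*$ is propagated, via $\Ppi$-reachability, to give $\Pgam^K\onevec < \onevec$ entrywise for some finite $K$, whence $\maxlam(\Pgam)\le\|\Pgam^K\|_\infty^{1/K}<1$. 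This is more elementary---it uses no spectral theory beyond the norm bound on the spectral radius---and it sidesteps a delicate point in the paper's argument, namely that the Courant--Fischer--Weyl principle and Rayleigh-quotient bounds it invokes are stated for Hermitian matrices, whereas $\Ppi$ is generally not symmetric. Your alternative Perron--Frobenius contradiction (using $\dpi^\top(\Ppi-\Pgam)v=0$ to force $\Ppi v = \Pgam v = v$ and then $v=c\onevec$, contradicting $v(s^{*\prime})=0$) is also correct and arguably the cleanest of the three. One minor note: your concern about periodicity is unnecessary for the row-sum argument, since irreducibility alone furnishes for each $s$ \emph{some} path length $k_s$ to $s^*$, and the sequence $\Pgam^k\onevec$ is entrywise nonincreasing in $k$ (because $\Pgam^{k+1}\onevec = \Pgam(\Pgam^k\onevec)\le \Pgam\onevec\cdot 1$ and inductively $\Pgam^{k+1}\onevec\le\Pgam^k\onevec$), so taking $K=\max_s(k_s+1)$ already gives $\Pgam^K\onevec<\onevec$ with no Ces\`aro averaging needed.
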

\begin{proof}
%This steps to prove this simply involve showing that reducing
%one entry in a positive matrix with maximum eigenvalue 1 so that the resulting matrix is still strictly
%positive strictly decreases the maximum eigenvalue to $\maxlam < 1$.
%From there, we can further show that decreasing any entries further to zero,
%giving only a nonnegative matrix rather than positive, cannot increase $\sigma$.
%This gives the proof for entry-wise multiplication, $\Pgam$,
%because the reduced value obtained with multiplication can always be
%re-written as a subtraction of a positive value. 

\noindent
\textbf{Part 1:} We first show that $\maxlam(\tilde{\Mmat}) < 1$ for 
\begin{align*}
\tilde{\Mmat} = \left\{ \begin{array}{ll}
         \Mmat_{kl} - \delta & \mbox{if $i =k, j = l$}\\
        \Mmat_{kl} & \mbox{otherwise}.\end{array} \right. 
\end{align*} 
for any $i,j$, and any $0 <  \delta < \Mmat_{ij}$.

For $\Mmat = \Ppi$, we know that $\maxlam(\Mmat) = 1$ 
and that, by assumption, $\Ppi$ is irreducible.
%For any $i,j$, and any $0 <  \delta < \Mmat_{ij}$, define
%%
%\begin{align*}
%\tilde{\Mmat} = \left\{ \begin{array}{ll}
%         \Mmat_{kl} - \delta & \mbox{if $i =k, j = l$}\\
%        \Mmat_{kl} & \mbox{otherwise}.\end{array} \right. 
%\end{align*}
%%
%This matrix is equal to $\Mmat$ everywhere except the $(i,j)$ entry,
%which is reduced by $\delta$. 
We know that $\tilde{\Mmat}$ is still irreducible, because 
the connectivity is not changed (since no additional entries are zeroed). 
By the Perron-Frobenius theorem, we know that the eigenvector $\xvec$ that
corresponds to the maximum eigenvalue
$\maxlam(\tilde{\Mmat})$ has strictly positive entries
and so
%Since we know that this eigenvector $\xvec$ for $\tilde{\Mmat}$ is strictly positive, 
%then we get that 
$\delta \xvec_i  \xvec_j > 0$.
Therefore,
%Now, to show that $\maxlam(\tilde{\Mmat}) < 1$, notice that for any
%$\xvec$,
\begin{align*}
\xvec^\top \tilde{\Mmat} \xvec &= \sum_{k, l: (k,l) \neq (i,j)} \xvec_k \Mmat_{kl} \xvec_l + \xvec_i (\Mmat_{ij} - \delta) \xvec_j\\
&= \xvec^\top {\Mmat} \xvec - \delta \xvec_i  \xvec_j\\
&< \xvec^\top {\Mmat} \xvec
.
\end{align*}
%
%Maximizing over all $\xvec$, the largest possible value of $\xvec^\top \tilde{\Mmat} \xvec $ is
%$\maxlam(\tilde{\Mmat})$ (by Courant-Fischer-Weyl) obtained by the corresponding eigenvector $\xvec$.
We know that $\xvec^\top {\Mmat} \xvec \le 1$, because $\maxlam({\Mmat}) = 1$
and $\xvec$ is a unit vector. Therefore, using the fact that $\maxlam(\tilde{\Mmat}) = \xvec^\top \tilde{\Mmat} \xvec$ by Courant-Fischer-Weyl\footnote{The Courant-Fischer-Weyl min-max principle states that the maximum eigenvalue $\maxlam(\Mmat)$ of a matrix $\Mmat$
corresponds to $\argmax_{\xvec: \xvec \neq \zerovec} \xvec^\top \Mmat \xvec / (\xvec^\top \xvec)$,
where the corresponding $\xvec$ that gives the maximum is an eigenvector of $\Mmat$. 
Therefore, for this $\xvec$, $\xvec^\top \Mmat \xvec = \maxlam(\Mmat)$
and $\| \xvec \|_2^2 = \xvec^\top \xvec = 1$.}, we get
\begin{align*}
&\maxlam(\tilde{\Mmat}) = \xvec^\top \tilde{\Mmat} \xvec < 1
.
\end{align*}

\noindent
\textbf{Part 2:}
Next we show that further reducing entries, even to zero values,
will not increase the maximum eigenvalue.
This follows simply from the fact that non-negative matrices
are guaranteed to have a non-negative eigenvector $\xvec$ 
that corresponds to the maximum eigenvalue.

To see why, for notational convenience, 
we now let $\Mmat$ be the matrix where entry $i,j$ in $\Ppi$ 
was reduced by $\delta$. 
Let $\tilde{\Mmat}$ further reduce an entry by $\delta$, now potentially to a minimum value of 0, so
that $\tilde{\Mmat}$ is guaranteed to be non-negative (rather than strictly positive).
Using the same argument as above, we obtain that for the non-negative eigenvector 
of $\tilde{\Mmat}$
\begin{align*}
\xvec^\top \tilde{\Mmat} \xvec 
&= \xvec^\top {\Mmat} \xvec - \delta \xvec_i  \xvec_j\\
&\le \maxlam(\Mmat)
\end{align*}
because $\delta \xvec_i  \xvec_j \ge 0$.
Therefore, with further reduction, $\maxlam(\tilde{\Mmat})$
cannot increase and so $\maxlam(\tilde{\Mmat}) \le \maxlam(\Mmat) < 1$.

\noindent
\textbf{Part 3: }
Finally, we can see that for any $\gamma$ and $\Ppi$ 
as given under Assumptions 2 and 3, $\Mmat = \Pgam$
satisfies the above construction. 
% TODO: be more explicit
%%The matrix $\Gammamat$ simply reduces
%%entries in $\Ppi$, to a minimum of 0, if $\Gammamat(i,j) = 0$. 
%Any $\Pgam$ can be written as $\Ppi - \delta$ with 
%$\delta(i,j) = \Ppi(i,j) - \Ppi(i,j) \Gammamat(i,j)$ satisfying $0 \le \delta(i,j) \le \Ppi(i,j)$. 
%If the discount is one everywhere except transition $(i,j)$
%where $\Gammamat(i,j) = 0$, then to satisfy the construction argument above, 
%first one reduces $\Ppi(i,j)$ by some $\delta(i,j) > 0$ 
%and then further reduces $\Ppi(i,j)$ to zero, ensuring $\maxlam(\Pgam) < 1$.
%Therefore, 
%$\Pgam$ are guaranteed to have maximum eigenvalue that is strictly
%less than 1.
\end{proof}

\newcommand{\Pgamlam}{\mathbf{P}_{\pi,\gamma,\lambda}}
\newcommand{\Pgamlamminus}{\mathbf{P}_{\pi,\gamma,1\!-\!\lambda}}

Now we additionally provide definitions for the extension to transition-based discounts.
To do so, we will need to define 
\small
\begin{align*}
\Pgamlam(s,s') &\defeq \sum_{a\in\Actions} \pi(s,a)\Pfcn(s,a,s') \gamma(s,a,s') \lambda(s,a,s')\\
\Pgamlamminus(s,s') &\defeq \sum_{a\in\Actions} \pi(s,a)\Pfcn(s,a,s') \gamma(s,a,s') (1-\lambda(s,a,s'))\\
&= \Pgam - \Pgamlam
\end{align*}
\normalsize
Then we obtain the following generalized definition of $\Pl$ and necessary
properties for convergence within ETD. 
\begin{lemma}~\label{lem_probmatrix}
Under Assumption \asseigs, $\eye - \Pgamlam$
is non-singular and
the matrix 
\begin{align*}
\Pl =  \left(\eye - \Pgamlam \right)^{-1} \Pgamlamminus 
\end{align*}
is non-negative and has rows that sum to no greater than 1, 
\begin{align*}
0 \le \Pl \le 1 \hspace{1.0cm} &\text{ and }  \hspace{1.0cm} \Pl \onevec \le \onevec
.
\end{align*}
\end{lemma}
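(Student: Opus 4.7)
The plan is to establish the three claims in sequence: non-singularity of $\eye - \Pgamlam$, non-negativity of $\Pl$, and the row-sum bound. The central observation is that $\Pgamlam$ and $\Pgamlamminus$ are both entry-wise dominated by $\Pgam$ (since $\lambda \in [0,1]$ and $1-\lambda \in [0,1]$), and together they decompose $\Pgam$ in the sense that $\Pgamlam + \Pgamlamminus = \Pgam$.

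First I would argue non-singularity. Since $\Pgamlam$ is entry-wise non-negative and bounded above by $\Pgam$, the same monotonicity argument used in Part 2 of the proof of Lemma \ref{lem_eigs} (non-negative Perron eigenvector, reducing entries cannot increase the maximum eigenvalue) gives $\maxlam(\Pgamlam) \le \maxlam(\Pgam) < 1$, where the strict inequality on the right comes from Lemma \ref{lem_eigs}. Consequently $\eye - \Pgamlam$ is invertible, and its inverse admits the Neumann expansion
\begin{equation*}
(\eye - \Pgamlam)^{-1} = \sum_{k=0}^\infty \Pgamlam^k,
\end{equation*}
which is a non-negative matrix as a sum of non-negative matrices.

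Non-negativity of $\Pl$ is then immediate: it is the product of the non-negative matrix $(\eye-\Pgamlam)^{-1}$ and the non-negative matrix $\Pgamlamminus$.

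The row-sum bound is the one step that requires a small manipulation. Starting from $\Pgam \onevec \le \onevec$ (each row of $\Pgam$ sums to $\sum_{a,s'} \pi(s,a)\Pfcn(s,a,s')\gamma(s,a,s') \le \sum_{a,s'}\pi(s,a)\Pfcn(s,a,s')=1$) and the identity $\Pgamlam \onevec + \Pgamlamminus \onevec = \Pgam \onevec$, I would deduce
\begin{equation*}
\Pgamlamminus \onevec \;\le\; \onevec - \Pgamlam \onevec \;=\; (\eye - \Pgamlam)\onevec.
\end{equation*}
Applying the non-negative operator $(\eye - \Pgamlam)^{-1}$ to both sides preserves the inequality, yielding $\Pl \onevec \le \onevec$. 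The main (mild) obstacle is justifying that we may apply $(\eye-\Pgamlam)^{-1}$ monotonically — that is, it is the non-negativity of this inverse (established via the Neumann series above) that is doing the work — and ensuring the extension of Lemma \ref{lem_eigs} from $\Pgam$ to $\Pgamlam$ is clean; both follow from the same Perron-Frobenius / entry-wise monotonicity reasoning already used in the paper.
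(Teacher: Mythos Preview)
Your proof is correct. The first two steps (non-singularity via a spectral radius bound inherited from $\Pgam$, non-negativity of $(\eye-\Pgamlam)^{-1}$ and hence of $\Pl$) match the paper's approach in substance; the paper phrases non-negativity in M-matrix language rather than via the Neumann expansion, but this is only cosmetic. For the row-sum bound you take a more direct route than the paper: the paper establishes $\sum_{k=0}^t (\Pgamlam)^k \Pgamlamminus \onevec \le \onevec$ by induction on $t$ (the inductive step being $\Pgamlamminus\onevec + \Pgamlam\cdot[\text{partial sum}\le\onevec] \le \Pgamlamminus\onevec + \Pgamlam\onevec = \Pgam\onevec \le \onevec$) and then passes to the limit. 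Your argument --- rewriting $\Pgamlamminus\onevec \le (\eye - \Pgamlam)\onevec$ from the same identity $\Pgamlam+\Pgamlamminus=\Pgam$ and applying the monotone operator $(\eye-\Pgamlam)^{-1}$ to both sides --- collapses that induction into a single line. Both arguments rest on the same two facts (the decomposition of $\Pgam$ and non-negativity of the inverse), so the content is equivalent; yours is just the closed-form version of the paper's inductive unrolling.
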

\begin{proof}

By Lemma \ref{lem_eigs}, we know $\maxlam(\Pgam)< 1$.
Because $0 \le \lambda(s,a,s') \le 1$ for all $(s,a,s')$, this means that
$\maxlam(\Pgamlamminus) < 1$.
Therefore $\eye - \Pgamlam$ is non-singular
and so $\Pl$ is well-defined. 

Notice that $\eye - \Pgamlam$ is a non-singular \textit{M-matrix}, since
the maximum eigenvalue of $\Pgamlam$ is less than one
and entrywise $\Pgamlam \ge 0$.
Therefore, the inverse of $\eye - \Pgamlam$ is positive,
making $\left(\eye - \Pgamlam \right)^{-1} 
\Pgamlamminus $ a positive matrix.
The fact that the matrix has entries that are less than or equal to 1
follows from showing $\Pl \onevec \le \onevec$ below.

To show that the matrix rows always sum to less than 1, we use a simple inductive argument.
Since
\begin{align*}
\Pl =   \sum_{k = 0}^\infty (\Pgamlam)^k \Pgamlamminus 
\end{align*} 
we simply need to show that for every $t$, 
$\sum_{k = 0}^t (\Pgamlam)^k \Pgamlamminus  \onevec \le \onevec$.

\textbf{For the base case, $\mathbf{t = 0}$:} clearly
$$\Pgamlamminus  \onevec \le \onevec $$  
%because
%1) $\Pgam e 0$, $\Ppi \onevec \le \onevec$; 
%2) the diagonal matrix $0 \le \G (\eye-\L) \le 1$ simply
%reweights the columns by a number $\le 1$;
%and 3) at least one of those entries, $i_2$, is less than 1. 

\textbf{For $\mathbf{t > 0}$:}
Assume that 
\begin{align*}
\sum_{k = 0}^t (\Pgamlam)^k \Pgamlamminus  \onevec \le \onevec
.
\end{align*}
Then
\begin{align*}
&\sum_{k = 0}^{t+1} (\Pgamlam)^k \Pgamlamminus  \onevec\\
&=  \Pgamlamminus  \onevec + \left[\sum_{k = 1}^{t+1} (\Pgamlam)^k \Pgamlamminus  \onevec \right]\\
&=  \Pgamlamminus \onevec + \Pgamlam \left[\sum_{k = 0}^{t} (\Pgamlam)^k \Pgamlamminus  \onevec \right]\\
&\le \Pgamlamminus \onevec + \Pgamlam \onevec\\
&= \Pgam\onevec\\
&\le \onevec
\end{align*}
completing the proof.
\end{proof}
 
 \section{Convergence of emphatic algorithms for the RL task formalism}\label{app_proof}

We start with convergence in expectation of ETD for transition-based discounts. 
The results for state-based MDPs should automatically extend to
 transition-based MDPs, due to the equivalence proved in Section \ref{sec_equivalence}. However,
 in an effort to similarly generalize the writing of the theoretical analysis to the more
 general transition-based MDP setting, as we did for algorithms and implementation,
 we explicitly extend the proof for transition-based MDPs.
 
%%For these algorithms with linear function approximation, we will need an additional assumption.  
%%
%%The following are the explicit assumptions used throughout
%%the proof. 
%%
%%\begin{enumerate}[leftmargin=*]
%%\vspace{-0.5cm}
%%\setlength{\itemsep}{0pt}
%%  \setlength{\parskip}{0pt}
%\begin{enumerate}
%\item[A4.] 
%Assume for any $\vvec \in \Fsubspace$, if $\vvec(s) = 0$ for all $s \in \States$
%where $\ivec(s) > 0$, then $\vvec(s) = 0$ for all $s \in \States$ s.t. $\ivec(s) = 0$.
%For linear function approximation, this requires
%$\mathcal{F} = \text{span}\{ \xvec(s) : s \in \States, \ivec(s) \neq 0\}$.
%%%Each row in $0 \le \Gammamat \le 1$ has at least one value
%%%strictly less than 1, such that each row sum of $\Pgam$
%%%is less than 1. 
%%%$\diag(\Pgam) < 1 
%%%For all $s \in\States$,
%%%$\sum_{s'} [\P_\pi]_{ss'} \gamma(s') \lambda(s') < 1$
%%%and
%%%$\sum_{s'} [\P_\pi]_{ss'} \gamma(s') (1-\lambda(s')) < 1$.
%%\item[A1.]
%%The (predetermined) step-size sequence $\a_t$ is positive, nonincreasing, and satisfies
%%$\sum_{t=0}^\infty \a_t = \infty$ and $\sum_{t=0}^\infty \a^2_t < \infty$.
%%\item[A2.]
%%The columns of $\PH$ are linearly independent.
%%\item[A3.]
%%%The eligibility traces are bounded: 
%%%There exists some $e_\text{max}$ such that $\ee_t<e_\text{max}, \forall t$.
%% $\Emdist{|\ee_t|}<\infty, \forall t$.
%%\item[A4.]
%%%The followon trace is bounded:  
%%%There exists some $F_\text{max}$ such that $F_t<F_\text{max}, \forall t$.
%% $\Emdist{F_t}<\infty, \forall t$.
%\end{enumerate}
 
%
\begin{theorem} \label{theorem_expectation}
% with parameter $\sbound = \| \Pl \|_{\Dpi} < 1$.
Assume the value function is approximated
using linear function approximation: $\vvec(s) = \xvec(s)^\top \wvec$.
For $\Xmat$ with linearly independent columns (i.e. linearly independent features),
with an interest function $\ivec: \States \rightarrow (0,\infty)$
and $\Mmatpi = \diag(\mvec)$ for $\mvec = (\eye - \Pl)^{-1} (\dvec\hada\ivec)$,
the matrix $\Amat$ is positive definite. 
\begin{align*}
\Amat &\defeq \Xmat^\top \Mmatpi (\eye - \Pgamlam )^\inv (\eye -  \Pgam)  \Xmat\\
&= \Xmat^\top \Mmatpi (\eye - \Pl)  \Xmat
\end{align*}
is positive definite.
\end{theorem}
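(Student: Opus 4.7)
The plan is to factor $\Amat = \Xmat^\top B \Xmat$ with $B \defeq \Mmatpi(\eye - \Pl)$, and then prove that the symmetric part $B + B^\top$ is positive definite. Because $\Xmat$ has linearly independent columns, $\Xmat\wvec \neq \zerovec$ whenever $\wvec \neq \zerovec$, so positive definiteness of $B + B^\top$ immediately yields $\wvec^\top \Amat \wvec = \tfrac{1}{2}(\Xmat\wvec)^\top (B + B^\top)(\Xmat\wvec) > 0$, which is what the theorem asserts.

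First I would record the emphasis identity $\mvec^\top (\eye - \Pl) = (\dmu \hada \ivec)^\top$, i.e.\ entrywise $\sum_{s}\mvec(s)\Pl(s,s') = \mvec(s') - \dmu(s')\ivec(s')$. This is the defining fixed point for $\mvec$ (using the transpose convention implicit in the emphatic follow-on trace) and is the only algebraic ingredient linking $\Mmatpi$, $\Pl$, and the weighting $\dmu \hada \ivec$ that the rest of the argument needs.

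Next I would expand
\begin{equation*}
B + B^\top = 2\Mmatpi - \Mmatpi \Pl - \Pl^\top \Mmatpi,
\end{equation*}
so that the $(s,s)$ entry is $2\mvec(s)(1 - \Pl(s,s))$, which is nonnegative because Lemma~\ref{lem_probmatrix} bounds the row sums of $\Pl$ by one, while the $(s,s')$ entry for $s \neq s'$ is $-\mvec(s)\Pl(s,s') - \mvec(s')\Pl(s',s)$, which is nonpositive. Taking the $s$-th diagonal entry minus the sum of absolute values of off-diagonal entries in row $s$, and substituting the column identity above for $\sum_{s'}\mvec(s')\Pl(s',s)$, collapses the expression to
\begin{equation*}
\mvec(s)\bigl[1 - \textstyle\sum_{s'}\Pl(s,s')\bigr] + \dmu(s)\ivec(s),
\end{equation*}
which is strictly positive because $\ivec(s) > 0$ by hypothesis and $\dmu(s) > 0$ under Assumption A2 (ergodicity of $\mu$). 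Hence $B + B^\top$ is symmetric, has positive diagonal, and is strictly diagonally dominant, so it is positive definite by the standard Gershgorin argument, and the theorem follows.

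The main obstacle is the bookkeeping of the diagonal-dominance calculation: one must add and subtract the $\mvec(s)\Pl(s,s)$ terms so that the row-$s$ sum of $\Pl$ reassembles cleanly for Lemma~\ref{lem_probmatrix}, while the full column sum $\sum_s\mvec(s)\Pl(s,s')$ is replaced in one stroke by the emphasis identity to leave the $\dmu(s)\ivec(s)$ residual. A secondary subtlety worth flagging is that the statement writes $\mvec = (\eye - \Pl)^{-1}(\dmu\hada\ivec)$ rather than $(\eye - \Pl^\top)^{-1}(\dmu\hada\ivec)$; one should verify (or interpret) that the intended object is the emphatic weighting satisfying the row-wise identity $\mvec^\top (\eye - \Pl) = (\dmu\hada\ivec)^\top$, since otherwise the column identity that produces the strict dominance residual does not hold.
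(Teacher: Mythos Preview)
Your proposal is correct and is essentially the same argument as the paper's: the paper reduces to showing $B=\Mmatpi(\eye-\Pl)$ has positive-definite symmetric part by checking the four sign/sum conditions (nonnegative diagonal, nonpositive off-diagonal, nonnegative row sums, positive column sums) and citing \citep{sutton2016anemphatic,yu2015onconvergence}, which is exactly the Gershgorin diagonal-dominance computation you carry out explicitly on $B+B^\top$, with the same two inputs (Lemma~\ref{lem_probmatrix} for the row-sum bound and the emphasis identity $\mvec^\top(\eye-\Pl)=(\dvec\hada\ivec)^\top$ for the column sums). Your flagged transpose subtlety is real and is handled the same way in the paper: the proof uses $\mvec^\top(\eye-\Pl)=(\dpi\hada\ivec)^\top$, i.e.\ the emphatic convention, despite the statement writing $\mvec=(\eye-\Pl)^{-1}(\dvec\hada\ivec)$.
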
 
\begin{proof}

First, we write an equivalent definition for $\Pl$, 
\begin{align*}
\Pl &= \left(\eye - \Pgamlam \right)^{-1} \Pgamlamminus \\
&=\left(\eye - \Pgamlam \right)^{-1} (\Pgam -  \Pgamlam) \\
&=\left(\eye - \Pgamlam \right)^{-1} (\Pgam -\eye + \eye -  \Pgamlam) \\
    &= \eye -  (\eye-\Pgamlam )^{-1} (\eye-\Pgam) 
    .
\end{align*}
Since $\Xmat$ is a full rank matrix, to prove that 
\begin{align*}
\Amat &=\Xmat^\top \Mmatpi (\eye - \Pl)  \Xmat
\end{align*}
is positive definite, we need to prove that 
$\Mmatpi (\eye - \Pl)$ 
is positive definite.

As in \citep[Theorem 1]{sutton2016anemphatic}, \citep[Proposition C.1]{yu2015onconvergence},
we need to show that for $\Mmatpi (\eye - \Pl)$, 
(a) the diagonal entries are nonnegative, (b) the off-diagonal entries are nonpositive
(c) its row sums are nonnegative and (d) the columns sums are are positive. 
The requirements (a) - (c) follow from Lemma \ref{lem_probmatrix}, 
because $\Mmatpi$ is a non-negative diagonal weighting matrix. 
To show (d), first if $\ivec(s) > 0$ for all $s \in \States$,
the vector of columns sums is
\begin{align*}
\onevec^\top \Mmat (\eye - \Pl) = \mvec^\top (\eye - \Pl) = (\dpi \hada \ivec)^\top
\end{align*}
which always has positive entries.

Otherwise, if $\ivec(s) = 0$ for some $s \in \States$, we can prove
that $\Mmatpi (\eye - \Pl)$ is positive definite using the same argument
as in \citep[Corollary C.1]{yu2015onconvergence}. The proof nicely encapsulates
$\Pl$ generically as a matrix $\Qmat$. We simply have to ensure that the inverse of $\eye - \Pl$ exists
and that $\Pl$ has entries less than or equal to 1, both of which were showed in Lemma \ref{lem_probmatrix}.
The first condition is to have well-defined matrices, and the second
to ensure that $\Qmat$ has a block-diagonal structure.
Therefore, under Assumption 4, we can follow the same proof as \citep[Corollary C.1]{yu2015onconvergence}
to ensure that $\Amat$ is positive definite. 
\end{proof}

For the proofs for ELSTDQ, the main difference is in using action-value
functions. 
We construct the augmented space, with states $\bar{\States} = \States \times \Actions$
and
\small
\begin{align*}
&\Pgamq((s,a), (s,a')) \defeq P(s,a,s')\gamma(s,a,s')\pi(s',a')\\ 
&\Pgamlamq((s,a), (s,a')) \defeq P(s,a,s')\gamma(s,a,s')\lambda(s,a,s')\pi(s',a')
\end{align*}
\normalsize
giving
\begin{align*}
\ivecq((s,a)) &\defeq \ivec(s)\\
\dmuq((s,a)) &\defeq \dmu(s)\mu(s,a)\\
\rlq((s,a)) &\defeq \sum_{s' \in \States} \Pfcn(s,a,s') \Rfcn(s,a,s')\\
\Plq &\defeq (\eye - \Pgamq)^{-1} (\Pgamq - \Pgamlamq)\\
\Mmatq &\defeq \diag\left(\dmuq \hada \ivecq (\eye - \Plq)^{-1}\right)
.
\end{align*}
Then 
\begin{align*}
\Amatq &\defeq \Xmatq^\top \Mmatq (\eye - \Plq) \Xmat\\
\bvecq &\defeq \Xmatq^\top \Mmatq (\eye - \Pgamq) \rlq
.
\end{align*}
The projected Bellman operator is defined as
$\Proj_{\Mmatq} \Tlq \qvec = \rlq + \Plq \qvec$
where ELSTD($\lambda$) converges to the projected
Bellman operator fixed point
$\Proj_{\Mmatq} \Tlq \qvec = \qvec$.

Further, this MDP with the assumptions on the subspace
produced by the state-action features satisfies the conditions
of Theorem \ref{theorem_expectation}, and so $\Amat_q$ is also
positive definite. 

Similarly, the other properties of the Bellman operator
and the weighted norm on $\Plq$ extend,
giving a unique fixed point for the action-value Bellman operator
$\Plq$ and $\| \Plq \|_{\Mmatq} < 1$.
 
 \begin{corollary} \label{cor_elstdq}
% with parameter $\sbound = \| \Pl \|_{\Dpi} < 1$.
Assume the action-value function is approximated
using linear function approximation: $\xvec(s,a)^\top \wvec$.
For $\Xmat$ with linearly independent columns (i.e. linearly independent features),
$\Amatq$
is positive definite. 
\end{corollary}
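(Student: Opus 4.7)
The plan is to reduce this corollary directly to Theorem \ref{theorem_expectation} by treating the state-action pairs as an augmented state space $\bar{\States} = \States \times \Actions$. In this augmented view, the transition from $(s,a)$ to $(s',a')$ occurs with probability $\Pfcn(s,a,s')\pi(s',a')$, so the policy is effectively absorbed into the dynamics. The quantities $\Pgamq$, $\Pgamlamq$, $\Plq$, $\Mmatq$, $\ivecq$ and $\dmuq$ defined just before the corollary are exactly the augmented-space analogs of $\Pgam$, $\Pgamlam$, $\Pl$, $\Mmatpi$, $\ivec$ and $\dmu$. Provided those analogs satisfy the same structural properties as in the value-function case, the existing proof of Theorem \ref{theorem_expectation} applies verbatim.

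The first step I would take is to verify that the augmented matrices inherit the necessary properties. Since $\Pgamq((s,a),(s',a')) = \Pfcn(s,a,s')\gamma(s,a,s')\pi(s',a')$ has row sums $\sum_{s'} \Pfcn(s,a,s')\gamma(s,a,s') \le 1$ with strict inequality for some row under Assumption A3, Lemma \ref{lem_eigs} extends to give $\maxlam(\Pgamq) < 1$. Consequently, the analog of Lemma \ref{lem_probmatrix} holds: $\eye - \Pgamlamq$ is a nonsingular M-matrix, $\Plq$ is entrywise nonnegative with entries at most $1$, and $\Plq\onevec \le \onevec$. The invariant distribution in the augmented space is $\dpiq((s,a)) = \dpi(s)\pi(s,a)$ (and similarly for $\mu$), so Assumption A2 carries over from the original MDP.

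The second step is to apply the four-condition argument from Theorem \ref{theorem_expectation} to $\Mmatq(\eye - \Plq)$: (a) nonnegative diagonal, (b) nonpositive off-diagonal, (c) nonnegative row sums, (d) positive column sums. Conditions (a)--(c) follow because $\Mmatq$ is a nonnegative diagonal matrix and $\Plq$ satisfies the analog of Lemma \ref{lem_probmatrix}. For (d), the column sums are $\mvecq^\top(\eye - \Plq) = (\dmuq \hada \ivecq)^\top$, which has positive entries on any state-action pair with positive interest; when interest vanishes on some states, the block-diagonal argument of \citep[Corollary C.1]{yu2015onconvergence} applies exactly as invoked in Theorem \ref{theorem_expectation}, using only the well-definedness of $(\eye - \Plq)^{-1}$ and the bound $\Plq \le 1$, both of which have been established. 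Since $\Xmatq$ has linearly independent columns by hypothesis, positive definiteness of $\Mmatq(\eye-\Plq)$ then lifts to $\Amatq = \Xmatq^\top \Mmatq(\eye-\Plq)\Xmatq$.

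The main obstacle, if any, is bookkeeping around Assumption A4 in the augmented space: the interest function $\ivecq((s,a)) = \ivec(s)$ depends only on $s$, so the subspace condition must be phrased over state-action features $\xvec(s,a)$ rather than state features. I would handle this by requiring the action-value feature span to be nonzero on exactly the state-action pairs with $\ivec(s) > 0$, which is the natural lifting of A4 and is consistent with how the theorem's proof uses this assumption. Once that is in place, no additional analytic work is required; the corollary is essentially a transliteration of Theorem \ref{theorem_expectation} into the augmented state-action MDP.
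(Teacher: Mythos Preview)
Your proposal is correct and mirrors the paper's own approach: the paper reduces the corollary to Theorem \ref{theorem_expectation} by passing to the augmented state space $\bar{\States} = \States \times \Actions$, noting that $\Pgamq$, $\Pgamlamq$, $\Plq$, $\Mmatq$, $\ivecq$, $\dmuq$ are the augmented analogs and that the assumptions on the state-action feature subspace put the augmented MDP within the scope of that theorem. Your write-up is in fact more explicit than the paper's, which simply asserts that the augmented MDP ``satisfies the conditions of Theorem \ref{theorem_expectation}'' without spelling out the verification of Lemmas \ref{lem_eigs} and \ref{lem_probmatrix} or the lifting of Assumption A4.
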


\section{Issues with transition-based trace without emphatic weighting}

A natural goal is to similarly generalize the contraction properties of
$\Pl$ under the weighting $\dpi$, from constant $\lambda_c$ to transition-based trace. 
To do so, unlike under emphatic weighting, we need to restrict the set of possible trace functions. 
Notice that, because of Assumption A3, for some $s_\lambda < 1$ and  $s_{1-\lambda} < 1$,
for any non-negative $\vvec_+$,
\begin{align*}
&\dpi \Pgamlam \vvec_+ \\
&=
\sum_{s} \sum_a \dpi(s) \Pfcn(s,a,:) \hada \gamma(s,a,:) \hada \lambda(s,a,:) \vvec_+ \\
&\le s_\lambda \sum_{s} \sum_a \dpi(s) \Pfcn(s,a,:) \vvec_+ 
= s_\lambda\dpi \vvec_+
\end{align*}
and similarly 
\begin{align*}
\dpi \Pgamlamminus \vvec_+ &\le
s_{1-\lambda}\dpi \vvec_+
.
\end{align*}
The generalized bound on the $\dpi$ weighted norm is given in the following lemma.
\begin{lemma}
\begin{equation*}
\| \Pl \|_{\Dpi} \le \frac{s_{1-\lambda}}{1-s_\lambda}
.
\end{equation*}
\end{lemma}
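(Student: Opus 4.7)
The plan is to expand $\Pl$ as a Neumann series, $\Pl = \sum_{k\ge 0}(\Pgamlam)^k\Pgamlamminus$, which converges by Lemma~\ref{lem_probmatrix} (Assumption A3 plus Lemma~\ref{lem_eigs} guarantee that $\Pgamlam$ has spectral radius strictly less than one), and every summand is entrywise non-negative. The given one-sided inequalities $\dpi\Pgamlam \vvec_+ \le s_\lambda \dpi \vvec_+$ and $\dpi\Pgamlamminus\vvec_+ \le s_{1-\lambda}\dpi\vvec_+$ specialise, by taking $\vvec_+$ to be each standard basis vector, to the componentwise row-vector estimates $\dpi\Pgamlam \le s_\lambda \dpi$ and $\dpi\Pgamlamminus \le s_{1-\lambda}\dpi$. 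Iterating the first through the series (using non-negativity of $\Pgamlam$ so that inequalities survive left-multiplication) gives $\dpi(\Pgamlam)^k \le s_\lambda^k \dpi$; combining with the second and summing the geometric series yields
\begin{equation*}
\dpi\Pl \ \le\ \frac{s_{1-\lambda}}{1-s_\lambda}\,\dpi.
\end{equation*}

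From here I would mirror the Jensen-inequality pattern used in Lemma~\ref{lem_sbound}. Since the rows of $\Pl$ sum to at most one by Lemma~\ref{lem_probmatrix}, Jensen's inequality applied to each row gives
\begin{equation*}
\|\Pl\vvec\|_{\Dpi}^2 \ \le\ \sum_{s'\in\States}(\dpi\Pl)(s')\,\vvec(s')^2,
\end{equation*}
and substituting the componentwise row-vector bound immediately produces $\|\Pl\vvec\|_{\Dpi}^2 \le \tfrac{s_{1-\lambda}}{1-s_\lambda}\,\|\vvec\|_{\Dpi}^2$.

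The main obstacle will be reconciling the squared-norm form produced by Jensen's inequality with the linear bound claimed in the lemma: the argument above only gives $\|\Pl\|_{\Dpi} \le \sqrt{s_{1-\lambda}/(1-s_\lambda)}$, which dominates the stated $s_{1-\lambda}/(1-s_\lambda)$ only when the latter is at least one. To obtain the sharp linear constant in general the proof has to be recast through the matrix-norm convention $\|\cdot\|_{\Dpi}=\spnorm{\Dpi^{1/2}\!\cdot\Dpi^{1/2}}$ adopted in the preamble: one bounds $\|\Pgamlam\|_{\Dpi}\le s_\lambda$ and $\|\Pgamlamminus\|_{\Dpi}\le s_{1-\lambda}$ directly by passing to $\spnorm{\Dpi^{1/2}\Pgamlam\Dpi^{1/2}}$ and $\spnorm{\Dpi^{1/2}\Pgamlamminus\Dpi^{1/2}}$, then uses submultiplicativity and the geometric-series sum, in parallel with the direct spectral computation carried out for constant $\lambda_c$ at the end of the proof of Theorem~\ref{theorem_projsolution}. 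Closing that gap between the Jensen-style bound and the individual spectral-norm bounds is the technical heart of the argument.
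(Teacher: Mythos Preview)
The paper does not supply a separate proof for this lemma; the intended argument is the direct generalisation of the $\Dpi$ case of Lemma~\ref{lem_sbound}, and that is precisely your Jensen-inequality route: expand $\Pl$ as the Neumann series, iterate the componentwise row-vector bounds $\dpi\Pgamlam\le s_\lambda\dpi$ and $\dpi\Pgamlamminus\le s_{1-\lambda}\dpi$ through the series using non-negativity, then apply Jensen row by row. That argument yields
\[
\|\Pl\vvec\|_{\Dpi}^2 \ \le\ \frac{s_{1-\lambda}}{1-s_\lambda}\,\|\vvec\|_{\Dpi}^2,
\]
exactly parallel to the paper's own derivation of $\|\Pl\vvec\|_{\Dpi}^2\le\tfrac{s(1-\lambda_c)}{1-s\lambda_c}\|\vvec\|_{\Dpi}^2$ in Lemma~\ref{lem_sbound}. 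Your observation about the square is correct and applies equally to the paper's stated inequality; since the lemma is used only to decide when $\Pl$ is a contraction (namely when $s_{1-\lambda}<1-s_\lambda$), the missing square does not affect the downstream discussion.

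Your proposed alternative---bounding $\|\Pgamlam\|_{\Dpi}\le s_\lambda$ and $\|\Pgamlamminus\|_{\Dpi}\le s_{1-\lambda}$ directly, then using submultiplicativity---does not go through. The defining inequalities $\dpi\Pgamlam\vvec_+\le s_\lambda\dpi\vvec_+$ are one-sided row-vector statements, and the only general way to turn them into operator-norm bounds is again Jensen, which delivers $\|\Pgamlam\|_{\Dpi}\le\sqrt{s_\lambda}$, not $s_\lambda$. The constant-$\gamma_c,\lambda_c$ computation at the end of Theorem~\ref{theorem_projsolution} succeeds only because the scalars factor out, leaving $\|\Ppi^{i+1}\|_{\Dpi}\le 1$; with transition-based $\gamma$ and $\lambda$ no such factoring is available. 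So the ``technical heart'' you identify is not a gap to be closed but a genuine limitation of the method: the Jensen bound on the square is what the argument actually proves, and that matches the paper's intended content.
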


 Now, the norm is only a contraction if $s_{1-\lambda} < 1-s_\lambda$. As we have seen,
for constant trace, this inequality holds, since $s_{1-\lambda} = s(1-\lambda)$ and
$s_\lambda = s \lambda$ for some $s < 1$. In general, however, there are
instances where this is not true. We provide such an example below.\footnote{Thanks to an anonymous reviewer
for pointing out this example.}

Consider a 2-state MDP, with uniform probabilities of transitioning and uniform policy, and so $\dpi = [0.5, 0.5]$.
Let $\gamma_c =0.99$ and set $\lambda$ to be $0.9$ when entering state $s_1$ and $0$ when entering state $s_2$.
%$\lambda(s_1,s_1) = 0.9, \lambda(s_2,s_1) = 0.9, \lambda(s_1,s_2) = 0, \lambda(s_2,s_2) = 0$.
Then for any $\vvec_+$, 
\begin{align*}
\dpi \Pgamlam \vvec_+ 
&= \gamma_c \dpi \Ppi \left[\begin{array}{cc}
0.9 & 0\\
0 & 0
\end{array} \right] \vvec_+\\
&= \gamma_c \dpi \left[\begin{array}{cc}
0.9 & 0\\
0 & 0
\end{array} \right] \vvec_+\\
&= \gamma_c 0.9 \dpi \left[\begin{array}{cc}
1.0 & 0\\
0 & 0
\end{array} \right] \vvec_+\\
&\le \gamma_c 0.9 \dpi \vvec_+
\end{align*}
 where for $\vvec_+ = [v \ \ 0]^\top$ for any $v \ge 0$, this bound is tight. 
 Similarly,
 \begin{align*}
\dpi \Pgamlamminus \vvec_+ 
&= \gamma_c \dpi \Ppi \left[\begin{array}{cc}
0.1 & 0\\
0 & 1.0
\end{array} \right] \vvec_+\\
&\le \gamma_c \dpi \vvec_+
.
\end{align*}
 where for $\vvec_+ = [0\ \ v]^\top$ for any $v \ge 0$, this bound is tight. 

  Therefore, 
  $s_\lambda = 0.9 \gamma_c$ and $s_{1-\lambda} = \gamma_c$,
  and so we get $1-s_\lambda = 0.9 \gamma_c < s_{1-\lambda} = \gamma_c$,
  which makes the upper bound in the above lemma $1.\bar{1}$.
 Computing $\Pl$,
 \begin{align*}
\Pl = \left[\begin{array}{cc}
0.0893 & 0.8927\\
0.0893 & 0.8927
\end{array} \right]
.
\end{align*}
 we can see that this is not a contraction. 
\end{document}